\documentclass{article}

\usepackage[margin=1in]{geometry}
\usepackage{times,natbib}

\usepackage{microtype}
\usepackage{graphicx}
\usepackage{booktabs} % for professional tables
\usepackage{multirow}
\usepackage{amsmath}
\usepackage{amssymb}
\usepackage{mathtools}
\usepackage{amsthm}

\usepackage{algorithm}
\usepackage{algorithmic}
\usepackage{float}

\usepackage{hyperref}

\usepackage[capitalize,noabbrev]{cleveref}
\usepackage{diagbox}
\theoremstyle{plain}
\newtheorem{theorem}{Theorem}[section]

\newtheorem{lemma}[theorem]{Lemma}
\newtheorem{corollary}[theorem]{Corollary}
\theoremstyle{definition}
\newtheorem{definition}[theorem]{Definition}

\theoremstyle{remark}
\newtheorem{remark}[theorem]{Remark}
\usepackage{amsmath,amsfonts,bm,physics,bbm}

\usepackage{mathrsfs}
\usepackage{subcaption}   % modern package for subfigures

\newcommand{\Unif}{{\mathrm{Unif}}}

\def\mI{\mathbb{I}}

\renewcommand{\eqref}[1]{\hyperref[#1]{(\ref{#1})}}

\crefname{ALG@line}{line}{lines}
\Crefname{ALG@line}{Line}{Lines}



\newtheorem*{lemma*}{Lemma}

\def\eqref#1{equation~\ref{#1}}
\def\1{\bm{1}}

\newcommand{\npt}{n_{\mathrm{pt}}}
\newcommand{\mpt}{m_{\mathrm{pt}}}
\newcommand{\Sm}{\mcS_{m}}
\newcommand{\Shat}{\widehat{S}}

\def\mI{{\bm{I}}}

\DeclareMathAlphabet{\mathsfit}{\encodingdefault}{\sfdefault}{m}{sl}
\SetMathAlphabet{\mathsfit}{bold}{\encodingdefault}{\sfdefault}{bx}{n}

\def\mcA{{\mathcal{A}}}
\def\mcB{{\mathcal{B}}}

\def\mcD{{\mathcal{D}}}

\def\mcM{{\mathcal{M}}}
\def\mcN{{\mathcal{N}}}

\def\mcS{{\mathcal{S}}}

\def\mcU{{\mathcal{U}}}
\def\mcV{{\mathcal{V}}}

\def\sI{{\mathbb{I}}}

\newcommand{\E}{\mathbb{E}}

\DeclareMathOperator*{\argmin}{arg\,min}

\crefname{algline}{line}{lines}  % Singular and plural forms
\Crefname{algline}{Line}{Lines}  % Capitalized versions

\newcommand{\algfootnote}[1]{%
    \leavevmode\unskip\strut\vadjust{\hbox to 0pt{\hss\footnotemark}}%
    \addtocounter{footnote}{-1}\footnotetext{#1}%
}
\usepackage[textsize=tiny]{todonotes}

\title{Auditing of Unlearning Algorithms}

\author{%
  \begin{tabular}{cc}
    Sahasrajit Sarmasarkar & Anastasia Koloskova \\[0.2em]
    Stanford University & University of Zurich \\[0.2em]
    \texttt{sahasras@stanford.edu} & \texttt{anastasiia.koloskova@uzh.ch} \\[1em]
    \multicolumn{2}{c}{Sanmi Koyejo} \\[0.2em]
    \multicolumn{2}{c}{Stanford University} \\[0.2em]
    \multicolumn{2}{c}{\texttt{sanmi@cs.stanford.edu}}
  \end{tabular}
}
\date{\today}

\begin{document}

\maketitle

\begin{abstract}
    Evaluating whether unlearning algorithms truly remove training data influence remains an open challenge. We propose a practical auditor that computes data-dependent lower bounds on the unlearning parameter $\varepsilon$ using membership inference attacks. Evaluating multiple unlearning algorithms, we find a sharp separation: algorithms with rigorous guarantees, such as model clipping and rewind-to-delete, achieve very small $\varepsilon$ bounds that do not falsify their unlearning guarantees, whereas empirical methods such as Hessian-based unlearning, interleaved ascent–descent, ascent on the forget set, and fine-tuning on the retain set exhibit large bounds, indicating poor unlearning. Our auditor provides a practical tool for empirically falsifying unlearning claims through a hypothesis-testing framework, and we validate it on CIFAR-100 and Shakespeare text. \footnote{Code available at: \url{https://github.com/Sahasrajit123/audit-unlearning-code}}.
\end{abstract}

\section{Introduction}

Machine unlearning is the task of removing the influence of specific training samples from an already-trained model, ideally yielding a model that behaves as if those samples had never been seen during training
\cite{cao2015towards}. Simply deleting a data point from the training set does not achieve this: its information remains embedded in the learned parameters and can often be recovered by an adversary with query
access, e.g.\ via membership inference attacks \cite{DBLP:journals/corr/ShokriSS16, carlini2022membershipinferenceattacksprinciples}. This motivates \emph{exact} unlearning, which produces a model with the same distribution as one retrained from scratch on the retained data \cite{bourtoule2021machine,ginart2019datadeletion}, and
\emph{approximate} unlearning, which relaxes this requirement in
exchange for efficiency \cite{guo2020certified,sekharimachineunlearningconvex,neel2021descent,golatkar2020eternal,kurmanji2023unbounded}. %The problem is particularly relevant in light of privacy legislation such as the EU's GDPR and its \emph{right to be forgotten}, under which controllers must erase personal data ``without undue delay'' \cite{gdpr2016}.

While exact unlearning offers the strongest guarantees, it is largely impractical at scale: existing schemes are either restricted to simple models such as $k$-means \cite{ginart2019datadeletion} or rely on data sharding \cite{bourtoule2021machine}, trading utility for cheap deletion. To scale to deep networks, a parallel line of work develops \emph{approximate} unlearning algorithms, with two kinds of compromise. The first family is \emph{heuristic} and provides no formal guarantee, including Fisher/Hessian-based parameter scrubbing \cite{golatkar2020eternal}, gradient ascent on the forget set, fine-tuning on the retain set, and interleaved ascent--descent schemes such as \textsc{SCRUB} \cite{kurmanji2023unbounded}. The second family adopts the \emph{certified} $(\varepsilon, \delta)$-indistinguishability notion of \citet{guo2020certified}, borrowed from differential privacy, which requires the unlearned model to be $(\varepsilon, \delta)$-indistinguishable \cite{dworkdpbook} from one retrained from scratch. This guarantee bounds the distinguishing advantage of any test operating on the unlearnt model relative to the baseline retrained model. Most certified-unlearning algorithms require (strong) convexity of the loss \cite{guo2020certified,sekharimachineunlearningconvex,neel2021descent,qiao2025hessianfree,zhang2025certifiedunlearningdeepneural} or a unique minimiser \cite{Allouah2025UtilityComplexityUnlearning}; only a handful of recent methods provide certified guarantees for genuinely nonconvex losses \cite{koloskova2025certifiedunlearningneuralnetworks,rewind2delete,chienlangevianunlearning,chaurasiadatadeletionunlearning}.

While the bounds above are theoretical, our goal in this paper is to design an auditor that uses empirical evidence to test whether a claimed unlearning guarantee actually holds. Formally, our goal is to reject the hypothesis $\varepsilon < \varepsilon_{\text{LB}}$ for any certified $(\varepsilon,\delta)$ unlearning algorithm in the spirit of recent work on differential-privacy auditing \cite{jagielski2020auditing,nasr2023tight,steinke2023privacyauditing1training}.

While our audit draws on differential-privacy auditing, the unlearning setting is closer in spirit to \emph{group privacy}~\cite{dworkdpbook}: certified unlearning gives $(\varepsilon,\delta)$-indistinguishability under deletion of an \emph{arbitrary subset} of forget points rather than a single point, and the two notions differ by a multiplicative blow-up in $\varepsilon$. Off-the-shelf DP auditors therefore certify only the weaker single-point neighbouring guarantee and yield loose $\varepsilon$ bounds in the unlearning setting; our meta-algorithm (\cref{metaalg:auditor_unlearning}) extends the one-run auditor of \citet{steinke2023privacyauditing1training} with new bounds tailored to subset-level guarantees. Our key contributions are summarised below.

\begin{itemize}
	\item \textbf{Threat model.} We adopt a strong adversary with black-box knowledge of the learning algorithm, the unlearning algorithm, the data-loading shuffler, and the initial model weights at the start of training. The adversary knows the underlying random mechanism used by these procedures but does not observe its specific random realisation on any given run.
	
	 \item \textbf{Auditor.} Leveraging the definition of $(\varepsilon,\delta)$-unlearning with respect to arbitrary forget subsets, we design a hypothesis test that rejects the null hypothesis $\varepsilon < \varepsilon_{\mathrm{LB}}$ with controlled Type~I error for the case of $\delta=0$ (see \cref{lemma:lower_bound_overlap_score}).

	\item \textbf{Empirical evaluation.} We develop two instantiations of this auditor in \cref{sec:auditor_instantiations} and evaluate them across a range of unlearning algorithms, revealing a sharp separation between certified and uncertified methods: certified unlearning algorithms yield very small $\varepsilon$ lower bounds, whereas uncertified algorithms produce lower bounds that frequently exceed $50$--$60$.
\end{itemize}

\section{Related work}

Empirically lower-bounding the privacy parameter $\varepsilon$ of an $(\varepsilon,\delta)$-DP mechanism has been studied in \cite{jagielski2020auditing,nasr2021adversaryinstantiationlowerbounds,nasr2023tight,selvatightauditingdp}, typically by planting canaries in a pair of neighbouring datasets and converting distinguishing accuracy into an $\varepsilon$ lower bound via the hypothesis-testing interpretation of DP \cite{kairouz2015composition,Wasserman01032010}. These approaches require many independent training runs to drive the audit's Type~I/II error down. More recent work \cite{steinke2023privacyauditing1training,Saeedfdpauditing} reduces the run cost to a \emph{single} training run for $(\varepsilon,\delta)$-DP and $f$-DP respectively, by planting many independent canaries and auditing them jointly within one run.

Our auditor borrows the joint-canary idea from \cite{steinke2023privacyauditing1training}, but the transfer to unlearning is not direct. As argued above, the subset-level guarantee yields weaker per-canary signal than the single-point neighbouring relation underlying standard DP audits, so we aggregate across multiple runs to obtain tight $\varepsilon$ lower bounds, and converting the joint canary statistics into a valid bound under this neighbouring relation requires new lemmas. To tighten the resulting bounds further, we replace the loss-difference scoring of \cite{steinke2023privacyauditing1training,Saeedfdpauditing} with a LiRA-style \cite{carlini2022membershipinferenceattacksprinciples} per-canary likelihood-ratio test on logit-scaled confidences, fitting Gaussians to the canary's score under the in-forget-set and never-seen hypotheses.

A separate line of work audits unlearning directly rather than DP: \emph{backdoor-based verification} (e.g.\ Athena \cite{sommer2022athena}, though \citet{zhang2024fragile} show such schemes can be gamed by a dishonest provider), \emph{output-difference} audits such as TAPE \cite{wang2025tape} that train a reconstructor on the gap between pre- and post-unlearning models, and \emph{sample-level} audits returning per-example unlearning-completeness scores \cite{wang2025iam,triantafillou2024progress,gu2025auditingdp}. Our auditor instead returns a quantity calibrated to the certified-unlearning parameter $\varepsilon$ itself, rather than a per-example completeness score: an empirical lower bound on the algorithm's $\varepsilon$ that is directly comparable to its claimed $(\varepsilon, \delta)$ guarantee.

% Our auditor instead returns an $\varepsilon$ lower bound on the certified-unlearning parameter of the \emph{algorithm itself}, directly comparable to its claimed $(\varepsilon,\delta)$ guarantee.

Early MIA formulations \cite{DBLP:journals/corr/ShokriSS16,yeom2018privacy} were sharpened by likelihood-ratio attacks (LiRA \cite{carlini2022membershipinferenceattacksprinciples}, RMIA \cite{zarifzadeh2024rmia}). In the unlearning setting, \citet{chen2021jeopardizes,bertran2024reconstruction} exploit the \emph{difference} between original and unlearned models to infer or reconstruct deleted samples. These formulations often assume access to  both pre and post unlearning models where as our audit is formulated around access to just the unlearnt model.

%%--- a strictly stronger threat model than ours, since certified unlearning compares the unlearned model to an unobserved retrain-from-scratch counterfactual and our auditor only requires query access to the unlearned model.

\section{Problem setup}

\subsection{Unlearning definition}

Let $\mcA$ be a training algorithm that, given a dataset $\mcD$, outputs a trained model $\mcA(\mcD)$. Suppose a subset $\mcD_f \subseteq \mcD$, called the \emph{forget set}, is requested to be removed. We write $\mcD_r := \mcD \setminus \mcD_f$ for the corresponding \emph{retain set}.

A natural baseline for unlearning is to retrain the model from scratch on $\mcD_r$. However, full retraining is often computationally expensive. The goal of an unlearning algorithm is therefore to produce, more efficiently, a model whose distribution is close to that of a suitable retraining procedure on the retain set. Formally, an \emph{unlearning algorithm} $\mcU$ takes as input the trained model, the full dataset, and the forget set, and outputs an \emph{unlearned} model $f_u = \mcU\big(\mcA(\mcD), \mcD, \mcD_f\big)$.

%%\ak{tell that the idea is to find model close to retraining from scratch, but more efficiently. There are notions of exact and approx unlearning (maybe actually this last part should be mentioned before - in the intro or related work sections).}

\subsection{$(\varepsilon,\delta)$ indistinguishability \cite{dworkdpbook}}

%We use the standard notion of $(\varepsilon,\delta)$-indistinguishability \cite{dworkdpbook}.

\begin{definition}[$(\varepsilon,\delta)$-indistinguishability]
Let $X$ and $Y$ be random variables over a common domain $\Omega$. We say that $X$ and $Y$ are $(\varepsilon,\delta)$-indistinguishable, and write $X \approx_{\varepsilon,\delta} Y$, if for every measurable set $S \subseteq \Omega$,  $\Pr[X \in S] \le e^{\varepsilon}\Pr[Y \in S] + \delta$ and 
$\Pr[Y \in S] \le e^{\varepsilon}\Pr[X \in S] + \delta$.
\end{definition}

\subsection{Certified unlearning}
%We use the standard notion of $(\varepsilon,\delta)$-indistinguishability \cite{dworkdpbook}.
%\begin{definition}[$(\varepsilon,\delta)$-indistinguishability]
%Let $X$ and $Y$ be random variables over a common domain $\Omega$. We say that $X$ and $Y$ are $(\varepsilon,\delta)$-indistinguishable, and write $X \approx_{\varepsilon,\delta} Y$, if for every measurable set $S \subseteq \Omega$,
%$\Pr[X \in S] \le e^{\varepsilon}\Pr[Y \in S] + \delta$ and $\Pr[Y \in S] \le e^{\varepsilon}\Pr[X \in S] + \delta$.
%\end{definition}

\begin{definition}[$(\varepsilon,\delta)$-certified unlearning, \citealp{guo2020certified,koloskova2025certifiedunlearningneuralnetworks}]
\label{def:certified_unlearning_general}
We say that $\mcU$ is an $(\varepsilon,\delta)$-certified unlearning algorithm for $\mcA$ if there exists a reference algorithm $\overline{\mcA}$ such that for every forget set $\mcD_f \subseteq \mcD$ with retain set $\mcD_r := \mcD \setminus \mcD_f$, the random variables $\mcU(\mcA(\mcD), \mcD, \mcD_f)$ and $\overline{\mcA}(\mcD_r)$ are $(\varepsilon,\delta)$-indistinguishable.
\end{definition}
Informally, an observer cannot distinguish the unlearned model from one produced using only the retain set, except up to $(\varepsilon, \delta)$.

\subsection{Threat model}
The general definition above leaves the adversary's auxiliary information unspecified. To make the audit concrete, we fix the following threat model. We consider a black-box adversary that additionally observes (i) the initial model $x_o$ at the start of training, and (ii) the shuffling order $\pi$ of training data-points within each epoch; the adversary has no access to any other sources of randomness used during training or unlearning. These assumptions strengthen the adversary relative to the standard threat model, and crucially do \emph{not} weaken the certified-unlearning guarantees of certified unlearning algorithms: both the model clipping algorithm (a.k.a.\ noisy fine-tuning) of \citet{koloskova2025certifiedunlearningneuralnetworks} and rewind-to-delete (R2D) \cite{rewind2delete} remain $(\varepsilon,\delta)$-certified under this stronger adversary.\footnote{R2D is formally analysed only in the full-batch training setting, and our threat model is consistent with that regime.}

Because the adversary observes $x_o$ and $\pi$, the indistinguishability requirement must hold conditional on them; the reference algorithm is allowed to depend on $(x_o, \pi)$ as well. Let $\mcA_{x_o, \pi}(\mcD)$ denote the learning algorithm started at $x_o$ with shuffling order $\pi$.
\begin{definition}[$(\varepsilon,\delta)$-certified unlearning under our threat model]
\label{def:certified_unlearning_threat}
We say that $\mcU$ is an $(\varepsilon,\delta)$-certified unlearning algorithm for $\mcA$ if for every initial model $x_o$ and shuffling order $\pi$, there exists a reference algorithm $\overline{\mcA}_{x_o, \pi}$ such that for every forget set $\mcD_f \subseteq \mcD$, the random variables $\mcU(\mcA_{x_o, \pi}(\mcD), \mcD, \mcD_f)$ and $\overline{\mcA}_{x_o, \pi}(\mcD_r)$ are $(\varepsilon, \delta)$-indistinguishable.
\end{definition}

Our goal is to design an \emph{auditor} (or adversary) that, given black-box access to $\mcA$ and $\mcU$ together with the initial model $x_o$ and shuffling order $\pi$ from our threat model, produces a data-dependent lower bound $\varepsilon_{\mathrm{LB}}$ on the true unlearning parameter $\varepsilon$. We formalise the audit as a one-sided hypothesis test: for a confidence level $\zeta \in (0, 1)$, the auditor outputs $\varepsilon_{\mathrm{LB}}$ such that, under $H_0 : \varepsilon \le \varepsilon_{\mathrm{LB}}$, the test rejects with probability at most $\zeta$. Whenever the test rejects, the true parameter exceeds $\varepsilon_{\mathrm{LB}}$ with confidence $1 - \zeta$, so a high $\varepsilon_{\mathrm{LB}}$ is direct evidence that the unlearning guarantee, if any, is weak. The main paper focuses on $\delta = 0$; see \cref{sec:redn_local_dp}. A discussion on extension of the test beyond $\delta=0$ is given in \cref{sec:nonzero_delta_lb}.

%%In the appendix, we extend the audit to stronger notions of indistinguishability used by certified unlearning algorithms \cite{koloskova2025certifiedunlearningneuralnetworks,chienlangevianunlearning,rewind2delete}, including Rényi DP, zero-concentrated DP \cite{cryptoeprint:concentratedDP,mironov2017renyi}, and Gaussian DP \cite{dong2022gaussian}, removing the $\delta = 0$ restriction without further assumptions. \footnote{We argue in \cref{????} that a small value of $\delta$ should not affect $\varepsilon_{\text{LB}}$ values a lot.}

\section{The algorithm}\label{sec:algorithm_audit}

We now describe the meta-algorithm underlying our audit; pseudocode is given in \cref{metaalg:auditor_unlearning}. The auditor has black-box query access to $\mcA$ and $\mcU$, which lets it run training-and-unlearning repeatedly on inputs of its choosing. At a high level, each run samples a balanced sign vector $S^{(\ell)}$, uses $S^{(\ell)}$ to decide which candidate batches are placed in the forget set, runs training followed by unlearning, and asks the auditor to predict $S^{(\ell)}$ from the resulting unlearned model. The construction extends the many-canary auditing paradigm used for differential privacy \cite{steinke2023privacyauditing1training,nasr2023tight,pillutla2024unleashing} to the unlearning setting.

\paragraph{Setup.} The training dataset is partitioned into $n$ batches of size $B$. Let $\mcB$ denote the universe of batches; the \emph{candidate forget pool} $\mcD_f \subseteq \mcB$ contains $m$ batches and the \emph{retain set} $\mcD_r$ contains the remaining $n - m$. We further partition $\mcD_f$ into $m$ candidate forget batches $\mcD_f = \bigcup_{j=1}^m \mcD_{f,j}$. Throughout, we fix the initial model $x_o$ and the batch-shuffling order $\pi$ (both observed by the auditor under our threat model), and suppress them from notation. We define the set of approximately balanced sign vectors
\[
\Sm := \{\, s \in \{\pm 1\}^m : 0 \le \#\{j : s_j = -1\} - \#\{j : s_j = +1\} \le 1 \,\},
\]
i.e., vectors with as many $+1$s and $-1$s as possible (one extra $-1$ when $m$ is odd, by convention). For $S \in \mcS_m$, let $\mcD_f(S) := \bigcup_{j : S_j = +1} \mcD_{f,j}$ be the sampled forget set under $S$ and $\mcD(S) := \mcD_r \cup \mcD_f(S)$ the corresponding training set.

\paragraph{Audit procedure.} Fix an even \emph{guess budget} $r$. In each of $L$ independent runs, the auditor:
\begin{enumerate}
    \item samples $S^{(\ell)} \sim \mathrm{Unif}(\mcS_m)$ and computes the unlearned model $f_u^{(\ell)} \gets \mcU\bigl(\mcA(\mcD(S^{(\ell)})),\; \mcD(S^{(\ell)}),\; \mcD_f(S^{(\ell)})\bigr)$
    \item produces a guess $\Shat^{(\ell)} \in \{-1, 0, +1\}^m$ with exactly $r/2$ entries equal to $+1$ and $r/2$ equal to $-1$ (the remaining $m - r$ entries are $0$, indicating abstentions);
    \item records the overlap score $V^{(\ell)} := \sum_{j=1}^{m} \max\{0,\; \Shat_j^{(\ell)} S_j^{(\ell)}\}$, which counts the number of correct nonzero guesses (out of $r$).
\end{enumerate}
The audit reports a summary statistic (mean or median) of $\{V^{(\ell)}\}_{\ell=1}^L$, which \cref{lemma:lower_bound_overlap_score} converts into the lower bound $\varepsilon_{\mathrm{LB}}$.

For analysis, \cref{metaalg:auditor_unlearning} induces a randomized mechanism $\mcM : \Sm \to \{-1, 0, +1\}^m$ with $\mcM(S^{(\ell)}) := \Shat^{(\ell)}$. Although $\mcM$ may invoke $\mcA$ and $\mcU$ arbitrarily many times, the only quantities hidden from it are the sampled sign vector $S^{(\ell)}$ and the internal randomness of $\mcA, \mcU$ within each run (e.g.\ gradient noise) -- the latter being precisely the randomness on which the algorithms' $(\varepsilon, \delta)$-certified guarantees rely. Both are accessed only through the unlearned model $f_u^{(\ell)}$, so $\mcM$'s output is post-processing of $f_u^{(\ell)}$ and the certified-unlearning guarantee passes through unchanged. Under strong unlearning, batches in and out of the forget set are nearly indistinguishable, so any overlap score reliably above the random-guessing baseline $r/2$ is evidence of weak unlearning; \cref{lemma:lower_bound_overlap_score} converts the mean (or median) of $\{V^{(\ell)}\}_{\ell=1}^L$ across independent runs into a precise lower bound on $\varepsilon$, with two instantiations of $\mcM$ given in \cref{sec:auditor_instantiations}.

%\subsection{Reduction to local differential privacy}{\label{sec:redn_local_dp}}

%The reason local differential privacy appears here is simple: if certified unlearning holds, then changing the hidden forget-pattern vector should not substantially change the distribution of the auditor's output. This turns the induced map $\mcM$ into a locally private mechanism over $\Sm$.
%
%Assume $(\mcA,\mcU)$ satisfies $(\varepsilon,0)$-certified unlearning. Then the induced mechanism
%\[
%\mcM : \Sm \to \{-1,0,+1\}^{m}
%\]
%satisfies an all-pairs local privacy guarantee: for every $S_1,S_2 \in \Sm$, the random variables $\mcM(S_1)$ and $\mcM(S_2)$ are $(2\varepsilon,0)$-indistinguishable. Equivalently, $\mcM$ is $(2\varepsilon,0)$-locally differentially private over the domain $\Sm$.
%
%Indeed, let $\mcD_f(S)$ denote the subset of $\mcD_f$ encoded by $S \in \Sm$, and write $\mcD(S) := \mcD_r \cup \mcD_f(S)$. Certified unlearning implies that
%\[
%\mcU\big(\mcA(\mcD(S_1)), \mcD(S_1), \mcD_f(S_1)\big)
%\quad\text{and}\quad
%\overline{\mcA}(\mcD_r)
%\]
%are $(\varepsilon,0)$-indistinguishable, and similarly
%\[
%\mcU\big(\mcA(\mcD(S_2)), \mcD(S_2), \mcD_f(S_2)\big)
%\quad\text{and}\quad
%\overline{\mcA}(\mcD_r)
%\]
%are $(\varepsilon,0)$-indistinguishable. By transitivity, the two unlearned-model distributions are therefore $(2\varepsilon,0)$-indistinguishable. Since $\mcM(S)$ is obtained from the corresponding unlearned model by post-processing, the same guarantee carries over to $\mcM(S_1)$ and $\mcM(S_2)$.

\subsection{Reduction to local differential privacy}\label{sec:redn_local_dp}

We now show that the audit reduces to a standard local-differential-privacy (LDP) lower-bound problem, which lets us import existing tools for LDP auditing. Recall from the previous section that, conditional on the fixed dataset $\mcD$, initial model $x_o$, and shuffling order $\pi$, the auditor induces a randomized mechanism $\mcM : \Sm \to \{-1, 0, +1\}^m$ that maps the sampled sign vector $S$ to a prediction $\mcM(S) := \Shat$. Concretely, for $S \in \Sm$, let $\mcD_f(S) \subseteq \mcD_f$ denote the forget subset encoded by $S$, let
\[
f_u(S) := \mcU\bigl(\mcA(\mcD_r \cup \mcD_f(S)),\, \mcD_r \cup \mcD_f(S),\, \mcD_f(S)\bigr),
\]
and let $f_r := \mcA(\mcD_r)$ denote the retrained reference. The auditor's prediction $\mcM(S)$ is computed from $f_u(S)$ alone (with possibly many queries to $\mcA$ and $\mcU$, but no direct access to $S$ or to the internal randomness of $\mcA, \mcU$).

% \begin{lemma}{\label{lemma:epsilon_lower_bound}}
% 	If \((\mcA,\mcU)\) is \((\varepsilon,0)\)-certified unlearning, then \(\mcM\) is \((2\varepsilon,0)\)-locally differentially private.\footnote{This reduction does not extend cleanly to \(\delta>0\). In particular, if \(X \approx_{\varepsilon,\delta} Y\) and \(Y \approx_{\varepsilon,\delta} Z\), then \(X \approx_{2\varepsilon,(1+e^\varepsilon)\delta} Z\). Even for small \(\delta\), this additive slack can be too large for the resulting LDP reduction to be useful.}
% \end{lemma}

\begin{lemma}\label{lemma:epsilon_lower_bound}
If $(\mcA, \mcU)$ is $(\varepsilon, 0)$-certified unlearning under our threat model (\cref{def:certified_unlearning_threat}), then $\mcM$ is $(2\varepsilon, 0)$-locally differentially private \cite{Bebensee2019LocalDP}; that is, for all $S, S' \in \Sm$ and every $T \subseteq \{-1, 0, +1\}^m$, $\Pr[\mcM(S) \in T] \le e^{2\varepsilon}\, \Pr[\mcM(S') \in T]$.
\end{lemma}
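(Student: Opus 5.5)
The plan is to pass through the common reference distribution $\overline{\mcA}_{x_o,\pi}(\mcD_r)$ and then invoke post-processing. Throughout, fix the dataset, the initial model $x_o$ and the shuffling order $\pi$, as in \cref{sec:redn_local_dp}. By \cref{def:certified_unlearning_threat} there is a single reference algorithm $\overline{\mcA}_{x_o,\pi}$ such that, for \emph{every} forget set, the unlearned model is $(\varepsilon,0)$-indistinguishable from $\overline{\mcA}_{x_o,\pi}$ applied to the corresponding retain set.

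The first step is to get both sign vectors compared against the same reference. Given $S,S'\in\Sm$, apply the definition to the full dataset $\mcD(S)=\mcD_r\cup\mcD_f(S)$ with forget set $\mcD_f(S)$. The retain set is then exactly $\mcD_r$. The same holds for $S'$ with forget set $\mcD_f(S')$. Since $\mcD_r$ does not depend on $S$, both $f_u(S)$ and $f_u(S')$ are $(\varepsilon,0)$-indistinguishable from the same random variable $R:=\overline{\mcA}_{x_o,\pi}(\mcD_r)$.

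This is the step I expect to need the most care. The definition quantifies over forget sets of a fixed $\mcD$, whereas here the full training set $\mcD(S)$ itself varies with $S$. I would address this by reading the definition as holding for every dataset, as is standard for certified unlearning. I would also point out that the reference may depend only on $(x_o,\pi)$ and on the retain set it is fed, so the same $R$ serves both $S$ and $S'$. If the reference were allowed to depend on $\mcD$, the argument would break, so this assumption must be stated explicitly.

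The second step is transitivity for pure indistinguishability. For any measurable set $A$ of models,
\[
\Pr[f_u(S)\in A]\le e^{\varepsilon}\Pr[R\in A]\le e^{2\varepsilon}\Pr[f_u(S')\in A],
\]
and symmetrically with $S$ and $S'$ exchanged. Hence $f_u(S)\approx_{2\varepsilon,0} f_u(S')$.

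The third step is post-processing. The mechanism $\mcM(S)$ is obtained by applying to $f_u(S)$ a randomized map $g$ whose internal randomness is independent of $S$ and of the internal randomness of $\mcA$ and $\mcU$ in that run. This map may include fresh auxiliary calls to $\mcA$ and $\mcU$, the fixed $(x_o,\pi)$, and the public data. For any $T\subseteq\{-1,0,+1\}^m$, write $\Pr[\mcM(S)\in T]=\E_{\omega}\Pr[f_u(S)\in g_\omega^{-1}(T)]$. Applying the bound from the second step pointwise in $\omega$ and then integrating gives $\Pr[\mcM(S)\in T]\le e^{2\varepsilon}\Pr[\mcM(S')\in T]$. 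This is exactly the $(2\varepsilon,0)$-LDP claim over the domain $\Sm$.
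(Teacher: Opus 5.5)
Your proposal is correct and matches the paper's proof: both $f_u(S)$ and $f_u(S')$ are compared to the common reference on $\mcD_r$, the two $\varepsilon$ bounds are chained to get $2\varepsilon$, and post-processing transfers the bound to $\mcM$. Two points the paper's proof leaves implicit are handled explicitly in your version: the reference $\overline{\mcA}_{x_o,\pi}$ must not depend on the full training set $\mcD(S)$, which varies with $S$, and post-processing must cover randomized maps whose randomness is independent of the run.
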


\begin{proof}
Fix any $S_1, S_2 \in \mcS_m$. Certified unlearning of $(\mcA, \mcU)$ implies $f_u(S_1) \approx_{\varepsilon, 0} f_r$ and $f_u(S_2) \approx_{\varepsilon, 0} f_r$. Using the triangle inequality, $f_u(S_1) \approx_{2\varepsilon, 0} f_u(S_2)$. \footnote{This reduction does not extend cleanly to $\delta > 0$. In particular, if $X \approx_{\varepsilon, \delta} Y$ and $Y \approx_{\varepsilon, \delta} Z$, then $X \approx_{2\varepsilon, (1 + e^\varepsilon)\delta} Z$; even for small $\delta$, this additive slack can be too large for the resulting LDP reduction to be useful.} Since $\mcM(S)$ depends on $S$ only through $f_u(S)$, it is a post-processing of $f_u(S)$, and post-processing preserves privacy.  
\end{proof}

% \begin{proof}
% 	For any \(S_1,S_2 \in \mcS_m\), certified unlearning gives $f_u(S_1) \approx_{\varepsilon,0} f_r$ and $f_u(S_2) \approx_{\varepsilon,0} f_r$.
% 	Hence \(f_u(S_1) \approx_{2\varepsilon,0} f_u(S_2)\). Since \(\mcM\) is a post-processing of \(f_u\), we obtain
% 	$\mcM(S_1) \approx_{2\varepsilon,0} \mcM(S_2)$
% 	which is exactly \((2\varepsilon,0)\)-local differential privacy.
% \end{proof}

\subsection{Lower bound on $\varepsilon$ from overlap score }

\newcommand{\lemmalowerbounddeltazero}[1][]{
	Let $\mcM : \Sm \to \{-1,0,+1\}^{m}$ be an $(\varepsilon,0)$-locally
	differentially private mechanism. Let $\mathcal{T}\subseteq\{-1,0,+1\}^{m}$ denote the subset of
	vectors with exactly $r/2$ entries equal to $+1$ and $r/2$ entries equal to
	$-1$. Set $M' = \binom{m}{\lfloor m/2\rfloor}$ and $K\;:=\;|\mathcal{T}|\;=\;\binom{m}{r/2,\,r/2,\,m-r}$.
	% \[
	% M'\;:=\;|\mathcal{S}|
	% \;=\;
	% \begin{cases}
	% 	\binom{m}{m/2} & m\text{ even}\\
	% 	\binom{m}{\lfloor m/2\rfloor} & m\text{ odd}
	% \end{cases},
	% \qquad
	% K\;:=\;|\mathcal{T}|\;=\;\binom{m}{r/2,\,r/2,\,m-r}.
	% \]
	Let $\{S^{(\ell)}\}_{\ell=1}^{L}$ be independent random vectors drawn uniformly
	from $\Sm$. For each $\ell\in[L]$, let $\Shat^{(\ell)}=\mcM(S^{(\ell)})$,
	assume $\Shat^{(\ell)}\in\mathcal{T}$ almost surely, and define $V^{(\ell)}\;:=\;\sum_{j=1}^{m}\max\!\bigl\{0,\,\Shat^{(\ell)}_{j}S^{(\ell)}_{j}\bigr\}
	\;\in\;\{0,1,\dots,r\}$.
	For each $u\in\{0,1,\dots,r\}$, define the pointwise bound
	\[
	\pi_{\varepsilon}(u)\;:=\,\;
	\sum_{\substack{\alpha_1,\alpha_2\in\{0,\dots,r/2\}\\ \alpha_1+\alpha_2=u}}
	\binom{m-r}{\lceil(m-r)/2\rceil-(\alpha_1-\alpha_2)}
	\binom{r/2}{\alpha_1}\binom{r/2}{\alpha_2}\cdot\,
	\frac{e^{\varepsilon}\;}{e^{\varepsilon}+M'-1}.
	\]
	Define the upper tail $P_{\varepsilon}(v):=\sum_{u= v}^{r}\pi_{\varepsilon}(u)$.
	Then for every $v\in\mathbb{R}$,
	\begin{equation}\label{eq:lb-mean#1}
		\Pr\!\left[\frac{1}{L}\sum_{\ell=1}^{L}V^{(\ell)}\ge v\right]
		\;\le\;
		\inf_{\lambda\ge 0}\exp\!\left(
		L\log\!\Big(\sum_{u=0}^{r}e^{\lambda u}\pi_{\varepsilon}(u)\Big)-\lambda L v
		\right)
	\end{equation}
	\begin{equation}\label{eq:lb-median#1}
		\Pr\!\left(\mathrm{Median}\!\left(\{V^{(\ell)}\}_{\ell=1}^{L}\right)\ge v\right)
		\;\le\;
		\binom{L}{\lceil L/2\rceil}\,P_{\varepsilon}(v)^{\lceil L/2\rceil}.
	\end{equation}
}

\begin{lemma}\label{lemma:lower_bound_overlap_score}
	\lemmalowerbounddeltazero
\end{lemma}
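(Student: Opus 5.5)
Fix a single run and first bound $\Pr[V^{(\ell)}=u]$ by $\pi_\varepsilon(u)$ for each $u$. Then the two tail bounds follow from independence across runs: a Chernoff bound for the mean and a union bound over majority subsets for the median.

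\textbf{Step 1 (posterior bound).} Condition on the output $\Shat=\hat s\in\mathcal{T}$. Since $S$ is uniform on $\Sm$ with $|\Sm|=M'$, Bayes' rule gives
\[
\Pr[S=s\mid \Shat=\hat s]=\frac{\Pr[\mcM(s)=\hat s]}{\sum_{s'\in\Sm}\Pr[\mcM(s')=\hat s]}.
\]
By $(\varepsilon,0)$-LDP, every term with $s'\neq s$ is at least $e^{-\varepsilon}\Pr[\mcM(s)=\hat s]$. Hence the posterior is at most
\[
\frac{1}{1+(M'-1)e^{-\varepsilon}}=\frac{e^{\varepsilon}}{e^{\varepsilon}+M'-1}.
\]
So, conditional on any $\hat s$, every $s\in\Sm$ has posterior mass at most $p_\varepsilon:=e^\varepsilon/(e^\varepsilon+M'-1)$.

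\textbf{Step 2 (counting).} For a fixed $\hat s\in\mathcal{T}$ and each $u$, count $N(u):=\#\{s\in\Sm: \sum_j\max\{0,\hat s_js_j\}=u\}$. Write $\alpha_1$ for the number of $+1$ guesses that are correct and $\alpha_2$ for the number of $-1$ guesses that are correct, so $u=\alpha_1+\alpha_2$. On the $r$ guessed coordinates, $s$ has $\alpha_1+(r/2-\alpha_2)$ entries equal to $+1$. The balance constraint of $\Sm$ fixes the total number of $+1$s at $\lfloor m/2\rfloor$. The $m-r$ abstained coordinates must therefore contain $\lfloor m/2\rfloor-r/2-(\alpha_1-\alpha_2)$ plus-ones. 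Since $r$ is even, $\binom{m-r}{\lfloor m/2\rfloor - r/2-(\alpha_1-\alpha_2)}=\binom{m-r}{\lceil (m-r)/2\rceil+(\alpha_1-\alpha_2)}$ by symmetry. This equals the stated $\binom{m-r}{\lceil(m-r)/2\rceil-(\alpha_1-\alpha_2)}$ after the relabeling $\alpha_1\leftrightarrow\alpha_2$, which leaves the sum invariant. The guessed coordinates contribute $\binom{r/2}{\alpha_1}\binom{r/2}{\alpha_2}$. Thus $N(u)$ is exactly the combinatorial sum in $\pi_\varepsilon(u)$, independent of $\hat s$, and
\[
\Pr[V=u\mid\Shat=\hat s]\le N(u)\,p_\varepsilon=\pi_\varepsilon(u).
\]
Averaging over $\hat s$ gives $\Pr[V=u]\le\pi_\varepsilon(u)$.

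\textbf{Step 3 (aggregation).} The runs are independent, so the $V^{(\ell)}$ are i.i.d. For the mean, Markov's inequality on $e^{\lambda\sum_\ell V^{(\ell)}}$ gives
\[
\Pr\Big[\tfrac1L\textstyle\sum_\ell V^{(\ell)}\ge v\Big]\le e^{-\lambda Lv}\big(\E e^{\lambda V}\big)^L .
\]
Since $e^{\lambda u}$ is nonnegative, $\E e^{\lambda V}=\sum_u e^{\lambda u}\Pr[V=u]\le\sum_u e^{\lambda u}\pi_\varepsilon(u)$. Taking the infimum over $\lambda\ge0$ gives the first bound.

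For the median, the event $\mathrm{Median}\ge v$ requires at least $\lceil L/2\rceil$ of the $V^{(\ell)}$ to be $\ge v$. A union bound over the $\binom{L}{\lceil L/2\rceil}$ index subsets, together with independence and $\Pr[V\ge v]\le P_\varepsilon(v)$, gives the second bound. If the sample median is defined as the upper middle value for even $L$, one should check that this convention indeed needs $\lceil L/2\rceil$ exceedances.

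\textbf{Main obstacle.} I expect the difficulty to lie in Step 2: matching the $\lceil\cdot\rceil/\lfloor\cdot\rfloor$ conventions and the sign of $\alpha_1-\alpha_2$ to the paper's balance convention (one extra $-1$ when $m$ is odd). I would verify the formula on small cases, e.g.\ $m=3,r=2$.

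Step 1 is the conceptual core, but it is short once one notices that the uniform prior turns pointwise LDP into a uniform posterior cap. No entropy or convexity argument is needed, because the bound is applied term by term and each term is nonnegative.
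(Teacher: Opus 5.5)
Your proposal is correct and follows the paper's proof essentially step for step. It uses the uniform-prior posterior cap $e^{\varepsilon}/(e^{\varepsilon}+M'-1)$ from pure LDP, the $(\alpha_1,\alpha_2,\beta)$ count of $s\in\Sm$ at overlap $u$ for a fixed $\Shat$, Markov/Chernoff for the mean, and a union bound over $\lceil L/2\rceil$-subsets for the median. Your Step~2 is actually more careful than the paper's: the paper asserts that the $\#(+1)-\#(-1)$ imbalance lies in $\{0,1\}$, which contradicts the one-extra-$-1$ convention of $\Sm$ for odd $m$, whereas your per-term count $\binom{m-r}{\lceil (m-r)/2\rceil+(\alpha_1-\alpha_2)}$ is the right one, and your $\alpha_1\leftrightarrow\alpha_2$ relabeling correctly shows that the summed $N(u)$ still equals the stated formula.
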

%\begin{lemma}{\label{lemma:lower_bound_overlap_score}}
%	\lemmalowerbound
%\end{lemma}

% In words, if the underlying unlearning guarantee were truly small, then even a good auditor could not achieve overlap scores much larger than chance except with small probability. This lets us convert a successful audit into a lower bound on the certified-unlearning parameter.

% Given an observed statistic $v$ (chosen as either the empirical mean or median of $\{V^{(\ell)}\}_{\ell=1}^{L}$), we obtain a lower bound $\varepsilon_{\mathrm{LB}}$ by setting the right-hand side of the corresponding inequality to a target level $\zeta$, fixing $\delta=0$, and solving for $\varepsilon$. The final reported bound is then divided by $2$, accounting for the reduction from certified unlearning to local differential privacy.

% To reject the null hypothesis $H_0: \varepsilon \le \varepsilon_{\mathrm{LB}}$, it suffices to evaluate the bounds at $\varepsilon=\varepsilon_{\mathrm{LB}}$. This follows because both $\pi_\varepsilon(v)$ and its upper tail $P_\varepsilon(v)$ are monotone non-decreasing in $\varepsilon$, so the right-hand sides of both inequalities increase with $\varepsilon$.

Given an observed statistic $v$ (the empirical mean or median of $\{V^{(\ell)}\}_{\ell=1}^L$), we set the right-hand side of the relevant bound from \cref{lemma:lower_bound_overlap_score} to a target level $\zeta$, fix $\delta = 0$, and solve for $\varepsilon$ to obtain an LDP lower bound; dividing by $2$ to undo the reduction of \cref{lemma:epsilon_lower_bound} yields the reported $\varepsilon_{\mathrm{LB}}$. To reject $H_0 : \varepsilon \le \varepsilon_{\mathrm{LB}}$ at level $\zeta$, it suffices to evaluate the bound at $\varepsilon = \varepsilon_{\mathrm{LB}}$, since both bounds are monotone non-decreasing in $\varepsilon$ (\cref{cor:bound_monotonicity}).

\begin{remark}[Auditor design choices]\label{remark:auditor_design_choices}
Two parameters jointly control the audit's power: the number of candidate forget batches $m$ (equivalently batch size $B = |\mcD_f|/m$) and the support size $r$.

\paragraph{Choice of $m$ (or $B$).} Larger $m$ raises the maximum attainable bound, since identifying the correct sign vector among $|\Sm|$ exponentially many candidates is information-theoretically harder. But larger $m$ also means smaller batches and weaker per-batch leakage, so the right operating point depends on the algorithm: heuristic methods (e.g.\ gradient ascent on forget, fine-tuning on retain) leak strongly per batch and prefer small $B$ (large $m$), while certified methods cap per-batch influence through their $(\varepsilon, \delta)$ budget and require larger $B$ (smaller $m$) for the auditor to beat random guessing.

\paragraph{Choice of $r$.} The support size $r$ lets the auditor abstain on uncertain batches. The empirical lower bound traces an inverted-U in $r$: it rises with $r$ as the maximum attainable bound grows and confident predictions are still available, then falls once the auditor is forced to commit to low-confidence batches whose errors dilute the overlap score. We study these trade-offs empirically in \cref{sec:appendix_batch_size_varying_study}.
\end{remark}
%
% \begin{remark}\label{remark:auditor_design_choices}
% A few facts about \cref{lemma:epsilon_lower_bound}:
% \begin{itemize}
% \item \textbf{Dependence on $m$.} The maximum lower bound implied by a perfect prediction grows with $m$, since identifying the correct sign vector among $|\mcS_m|$ exponentially many candidates is much harder at large $m$.
% \item \textbf{Choice of batch size $B$.} Since $m$ scales inversely with $B$, smaller batches enlarge $\mcS_m$ and raise the attainable bound — but only if the adversary can still pick the right hypothesis. Uncertified algorithms admit a strong per-batch signal, so small $B$ (large $m$) is preferable; certified algorithms have weak per-batch signal, so larger $B$ (smaller $m$) is needed to obtain any nonzero bound.
% \end{itemize}
% A detailed empirical study of the $B$ and $r$ tradeoffs, including the inverted-U behaviour in $r$ and concrete maximum-attainable-bound numbers, is given in \cref{sec:appendix_batch_size_varying_study}.
% \end{remark}

\section{Two instantiations of the auditor meta-algorithm}\label{sec:auditor_instantiations}

We now instantiate the abstract mechanism $\mcM$ from \cref{metaalg:auditor_unlearning} in two ways, and contrast both with a pairwise baseline adapted from DP auditing. Both instantiations use repeated runs of training and unlearning to calibrate a likelihood-based predictor, but differ in what they predict from the final unlearned model. Instantiation~I scores each candidate forget batch independently and is suitable when $m$ is too large to enumerate $\Sm$; Instantiation~II treats each balanced sign vector as a hypothesis and exploits cross-batch correlations, but is practical only when $|\Sm|$ is enumerable.

For a model $f$, write $f(x)_y$ for the probability assigned to label $y$ on input $x$ and $\phi(p) := \log\!\left(\tfrac{p}{1-p}\right)$ for the logit transform. We write $\mathcal{N}(s; \mu, \sigma^2) := \tfrac{1}{\sqrt{2\pi\sigma^2}} \exp\!\left(-\tfrac{(s-\mu)^2}{2\sigma^2}\right)$ for the Gaussian likelihood. Modelling scores as Gaussian is a choice for the attack only; \cref{lemma:lower_bound_overlap_score} requires no such assumption.

\paragraph{Instantiation~I: Batchwise inclusion/exclusion prediction.}
When $m$ is large, enumerating $\Sm$ is infeasible, so we score each batch independently. For every forget example, we fit two Gaussians to the logit scores collected across $\Gamma$ independent calibration runs $\mcU(\mcA(\mcD(S)), \mcD, \mcD_f(S))$ with $S \sim \Unif(\Sm)$: an \emph{in}-distribution conditioned on $S_j = +1$ and an \emph{out}-distribution conditioned on $S_j = -1$, in the spirit of LiRA \cite{carlini2022membershipinferenceattacksprinciples}.\footnote{See \cref{sec:appendix_logit_cross_entropy_comparison} for a comparison of logit score versus cross-entropy loss; the two yield similar values on most datasets.} At evaluation, for each batch we sum per-point log-likelihoods under the in- and out-distributions separately and take their difference to obtain a batch score $\Lambda_j$. We then predict $+1$ for the top $r/2$ batches by $\Lambda_j$, $-1$ for the bottom $r/2$, and $0$ for the rest. Calibration runs and evaluation runs are drawn independently, which is required for the audit's lower bound to be valid. See \cref{alg:instantiation_I} for pseudocode.

\paragraph{Instantiation~II: Joint sign-vector prediction.}
When $|\Sm|$ is enumerable, we treat each candidate sign vector as a hypothesis, capturing cross-batch correlations that batchwise scoring ignores. During calibration, for every candidate $\widetilde S \in \Sm$ and every forget example $(x, y)$, we fit a Gaussian to the logit scores collected from $\Gamma$ independent runs of $\mcU(\mcA(\mcD(\widetilde S)), \mcD, \mcD_f(\widetilde S))$. At evaluation, we score each candidate by the cumulative log-likelihood of the observed scores under its fitted Gaussians and return the highest-scoring candidate. Since the auditor commits to a full sign vector, $r = m$ here. See \cref{alg:instantiation_II} for pseudocode.

\paragraph{Pairwise auditor (baseline).}\label{par:pairwise_auditor}
As a baseline we adapt the pairwise distinguishability auditor of \citet{nasr2021adversaryinstantiationlowerbounds}, which distinguishes two unlearned models trained on different forget sets and converts true/false positive rates into an $\varepsilon$ lower bound via \cite[Thm.~2.1]{kairouz2015composition} with Clopper--Pearson confidence intervals; the full description and exact formula are deferred to \cref{sec:appendix_pairwise_auditor}.

\section{Datasets and auditor setup} \label{sec:datasets}

\paragraph{CIFAR-100~\citep{Krizhevsky09learningmultiple}.}\label{sec:cifar100_dataset}
We train a \texttt{TinyNetCIFAR100} model to ${\sim}55\%$ validation accuracy. The forget set $\mcD_f$ contains $10\%$ of training points ($4{,}500$); we use a \emph{uniform} split (sampled at random) and an \emph{adversarial} split (forget and retain drawn from largely disjoint label classes). Audit batch size is $B=1$ for uncertified and $B=750$ for certified algorithms; the rationale and full construction are in \cref{sec:appendix_cifar100_dataset,sec:instantiation_discussion_certified_uncertified}.

\paragraph{Shakespeare~\citep{shakespeare_gutenberg_complete_works}.}\label{sec:shakespeare_dataset}
Following \citet{mcmahan2017communication} we treat each role as a client, subsample $300$ roles, and partition at the role level into $33$ forget roles (${\sim}10\%$ of training characters) and $267$ retain roles. The model is a $2$-layer character-level LSTM reaching ${\sim}0.53$ accuracy; we evaluate only uncertified methods, using the batchwise inclusion/exclusion auditor with $B=400$. Full construction, score definition, and model details are in \cref{sec:appendix_shakespeare_dataset}.

\section{Unlearning Algorithms Audited}\label{sec:unlearning_algorithms}

For all runs below, $\varepsilon_{\mathrm{LB}}$ denotes the lower bound obtained from the mean aggregation of overlap scores in \cref{lemma:lower_bound_overlap_score}.\footnote{Mean and median aggregations show similar trends; a detailed comparison is given in \cref{sec:mean_median_comparison}.} For all experiments, the confidence parameter $\zeta$ is set to $0.05$.

We organise the audited algorithms into two groups: certified algorithms with formal $(\varepsilon,\delta)$ guarantees (\cref{sec:certified_unlearning_audited}), and heuristic algorithms without such guarantees (\cref{sec:uncertified_unlearning_audited}). \Cref{sec:unlearning_comparison} discusses them in detail, and full hyperparameters and per-method experimental details are deferred to \cref{sec:appendix_unlearning_algorithms}.

\subsection{Certified unlearning algorithms}\label{sec:certified_unlearning_audited}

\paragraph{Model clipping.}\label{sec:model_clipping}
We audit the certified unlearning algorithm of \citet{koloskova2025certifiedunlearningneuralnetworks}, which interleaves projected noisy gradient steps on an $\ell_2$-ball of radius $C_2$ with a noiseless retain-set fine-tuning phase. The number of noisy steps is set by Theorem~4.2 of \citet{koloskova2025certifiedunlearningneuralnetworks} to achieve the target $(\varepsilon,\delta)$ guarantee. Recalling the certified-method auditor configuration of \cref{sec:cifar100_dataset}---batch size $B=750$ ($m=6$ forget batches), joint sign-vector predictor (\cref{alg:instantiation_II}) with $\Gamma=50$ calibration rounds and $L=100$ runs---we sweep $\varepsilon$, $C_2$, and $\sigma$ across target privacy levels spanning five orders of magnitude (up to $\varepsilon=10^5$). At the prescribed stopping time of Theorem~4.2, the audit yields $\varepsilon_{\mathrm{LB}} \approx 0$ across the entire grid, consistent with the certification holding. As a diagnostic, we additionally compute $\varepsilon_{\mathrm{LB}}(t)$ at intermediate iterates---treating the algorithm as if it had terminated at step $t$---which remains near zero throughout the noisy phase, indicating that the audit gains no traction even between certification checkpoints. Full details are reported in \cref{sec:appendix_model_clipping_setup}.

% The full algorithm, hyperparameter grid, and per-step plots

\paragraph{Uncertified variant of model clipping.}
For this uncertified variant, we set $C_2=5$ and $\sigma=10^{-4}$ and truncate the noisy phase to $10$ epochs, after which the algorithm proceeds to its retain-set fine-tuning phase as in the certified setting. The ratio $C_2/\sigma$ is far larger than what Theorem~4.2 admits, so the algorithm is no longer certified; we report $\varepsilon_{\mathrm{LB}}$ as a function of step index $t$ with $L=500$ runs to obtain tighter bounds (\cref{fig:model_clipping}). Further details are given in \cref{sec:appendix_uncertified_model_clipping}. We compare both uniform and adversarial splits.

\paragraph{Rewind-to-delete.}\label{sec:rewind_to_delete}
Rewind-to-delete (R2D) provides certified unlearning in the full-batch setting by adding final-step Gaussian noise calibrated, per Theorem~3.1 of \citet{rewind2delete}, to make the trained and unlearned models $(\varepsilon, \delta)$-indistinguishable. The noise scale, sensitivity, and other choices are detailed in \cref{sec:appendix_rewind_to_delete}. We consider both uniform and adversarial splits, with plots in \cref{fig:r2d}.

% \paragraph{Rewind-to-delete.}\label{sec:rewind_to_delete}
% Rewind-to-delete (R2D) provides certified unlearning guarantees in the full-batch training setting, with the noise added in the final step set so as to meet the target $(\varepsilon,\delta)$ guarantee, as given in Theorem~3.1 of \citet{rewind2delete}. Since we consider $\varepsilon>1$, the noise parameter $\sigma$ must satisfy $\Phi\!\left(-\tfrac{\varepsilon \sigma}{\Delta} + \tfrac{\Delta}{2\sigma}\right) - e^{\varepsilon}\, \Phi\!\left(-\tfrac{\varepsilon \sigma}{\Delta} - \tfrac{\Delta}{2\sigma}\right) \le \delta$ where $\Phi$ is the Gaussian CDF and $\Delta = \|\theta'_T - \theta''_K\|$ is the sensitivity, with $\theta'_T$ the trained model and $\theta''_K$ the unlearned model obtained by loading the checkpoint $\theta'_{T-K}$ and then training for $K$ epochs on the retain set. Specific choices of $T$ and $K$ and the rationale for keeping them small are deferred to \cref{sec:appendix_rewind_to_delete}. We consider both uniform and adversarial splits, with plots in \cref{fig:r2d}.

\paragraph{Observations.} R2D attains tighter lower bounds than model clipping across the sweep, reflecting its single-shot full-batch noise calibration versus model clipping's per-step noise budget over many iterations. Both methods exhibit the uniform-vs.\ adversarial split asymmetry discussed in \cref{sec:unlearning_comparison}.

%\paragraph{Observations for model clipping and R2D.} R2D attains tighter lower bounds than model clipping, and the \emph{baseline pairwise auditor} performs well when bounds are small but saturates quickly otherwise. Uniform splits typically yield higher lower bounds than adversarial splits. We attribute this to the structure of the splits: uniform splits draw forget and retain from the same class distribution, so forget points are in-distribution relative to retain and share feature support, whereas adversarial splits assign disjoint classes and produce semantically separated forget points. Prior work shows that retain--forget feature entanglement leaves residual memorisation that unlearning cannot fully remove~\cite{cheng2026machineunlearningretainforgetentanglement}, and that semantically adjacent retain classes provide an exploitable attack channel~\cite{ebrahimpourboroojeny2025necessityoutputdistributionreweighting} -- both of which are operative in the uniform-split regime but not the adversarial one.

\begin{figure}[t]
	\centering
	\begin{subfigure}{0.30\textwidth}
		\centering
		\includegraphics[scale=0.25]{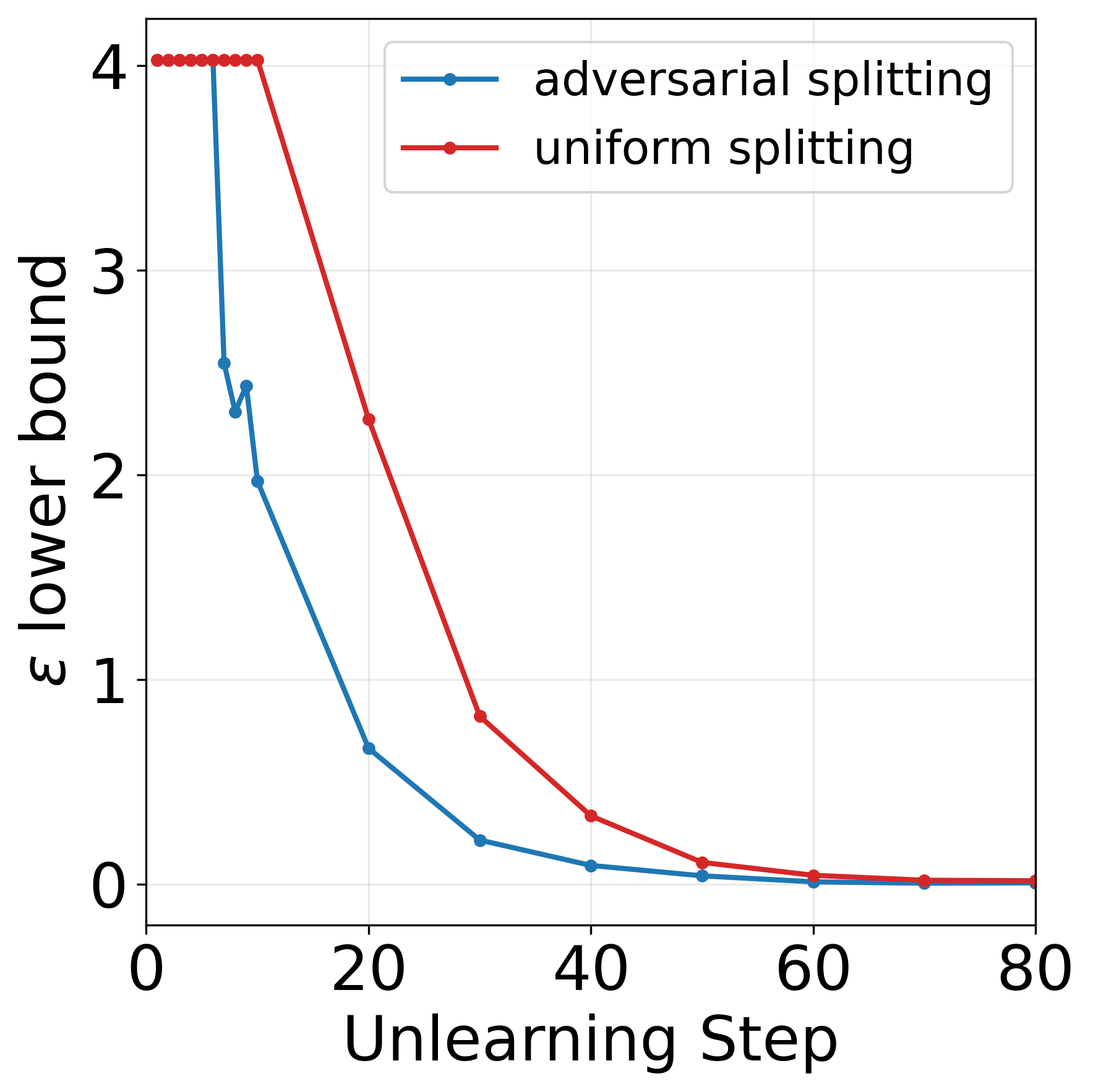}
		\caption{Uncertified variant of model clipping}
		\label{fig:model_clipping}
	\end{subfigure}
	\hfill
	\begin{subfigure}{0.30\textwidth}
		\centering
		\includegraphics[scale=0.25]{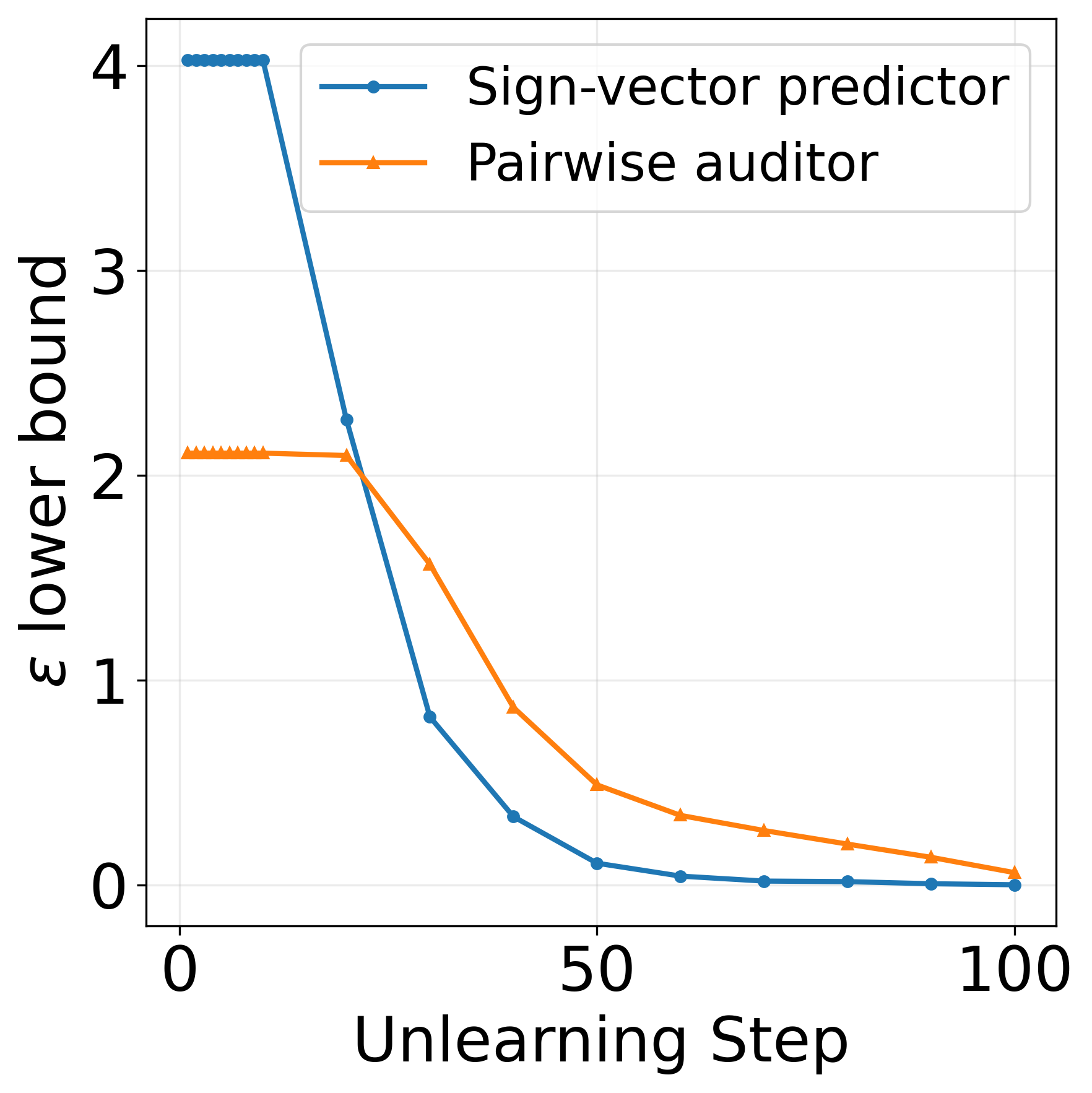}
		\caption{Comparison with pairwise auditor for uniform splitting}
		\label{fig:comp_pairwise}
	\end{subfigure}
	\begin{subfigure}{0.30\textwidth}
		\centering
		\includegraphics[scale=0.23]{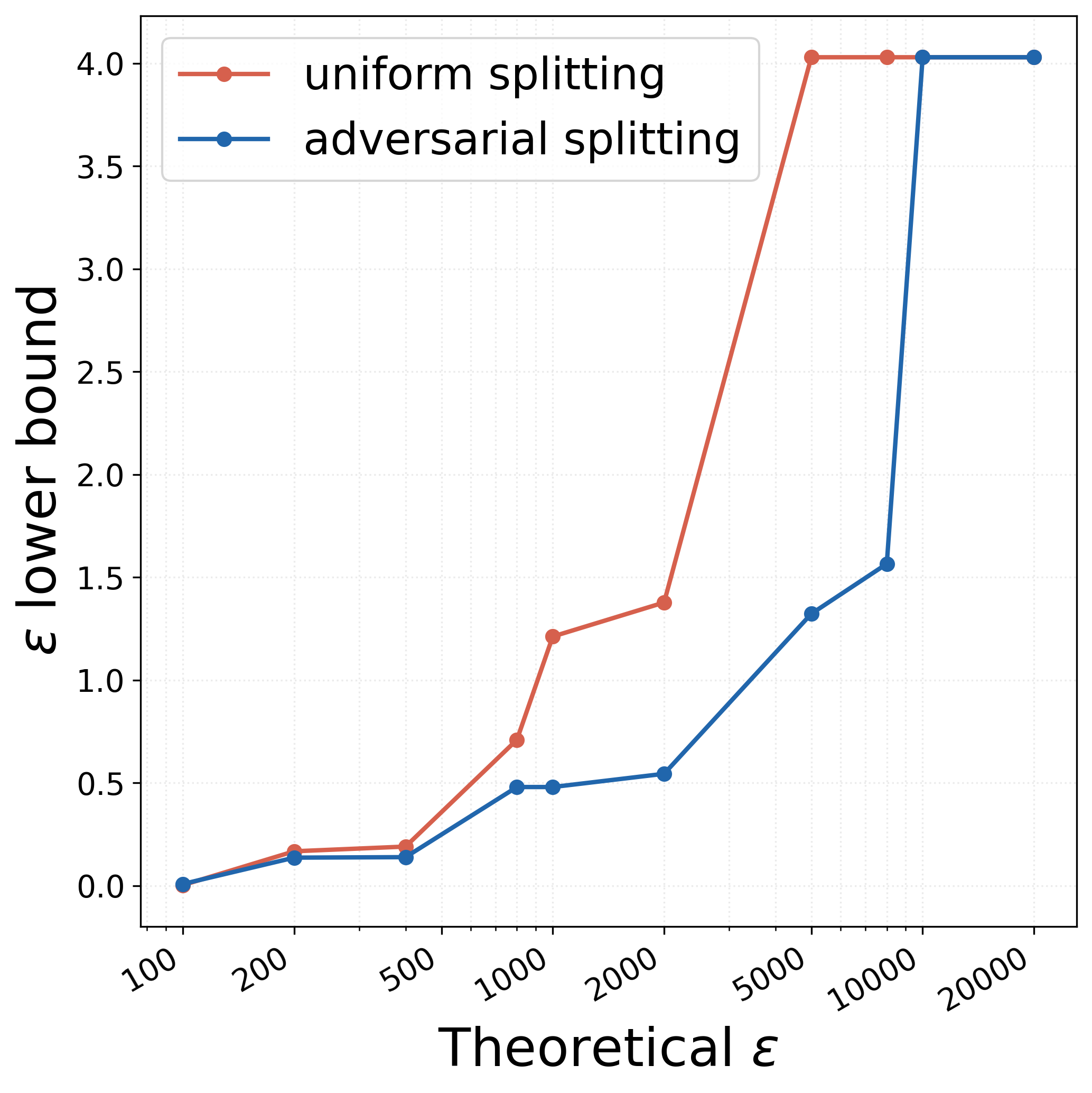}
		\caption{Rewind to delete for varying $\varepsilon$}
		\label{fig:r2d}
	\end{subfigure}
	\caption{Rewind to delete and model clipping audits for CIFAR100}
\end{figure}

\subsection{Uncertified unlearning algorithms}\label{sec:uncertified_unlearning_audited}

We audit four heuristic unlearning algorithms on both CIFAR-100 and Shakespeare: Hessian-based unlearning, interleaved descent--ascent retraining, ascent on the forget set, and pure fine-tuning on the retain set. Base-training settings and per-method learning-rate schedules are deferred to \cref{sec:appendix_uncertified_methods,sec:appendix_ida_retraining,sec:appendix_ascent_on_forget}. We choose support size $r=100$ for the Shakespeare dataset (with $m=400$ forget batches) and support size $r=3000$ for CIFAR-100 (with $m=4500$ batches). The number of calibration rounds $\Gamma$ is set to $50$ and $L=10$ evaluation runs.

\paragraph{Hessian-based unlearning.}\label{sec:hessian_unlearning}
We audit the Hessian-based unlearning algorithm of \citet{zhang2025certifiedunlearningdeepneural}, which performs a Newton-style update on the forget-set gradient followed by addition of Gaussian noise to every parameter. \citet{zhang2025certifiedunlearningdeepneural} prove certification under loss convexity; we test the algorithm in the nonconvex regime where this assumption is violated. The explicit update rule, the LiSSA approximation of the inverse-Hessian--vector product \cite{agarwallisasecondorder}, hyperparameter settings, and an ablation varying the final noise scale are deferred to \cref{sec:appendix_hessian_unlearning}.

\paragraph{Interleaved descent--ascent (IDA) retraining.}\label{sec:ida_retraining}
IDA alternates retain-set gradient descent with $\lambda$-weighted forget-gradient steps inserted after every $q$ retain steps, for the first $E_f$ epochs only; thereafter it reverts to plain retain descent. Update rule and per-dataset $(\lambda, q, E_f)$ are in \cref{sec:appendix_ida_retraining}.

% \paragraph{Interleaved descent--ascent (IDA) retraining.}\label{sec:ida_retraining}
% Interleaved descent--ascent (IDA) retraining performs retain-set gradient descent, with one \emph{interleaved} step inserted after every $q$ retain steps (i.e.\ once per cycle of $q+1$ updates). An interleaved step combines the retain gradient with a $\lambda$-weighted forget-gradient term, where $\lambda > 0$ controls the weight of the forget gradient relative to the retain gradient. Interleaving is applied only for the first $E_f$ epochs; thereafter the algorithm reverts to plain retain descent. Per-dataset values of $\lambda$, $q$, $E_f$, the learning-rate schedule, and the explicit update rule are deferred to \cref{sec:appendix_ida_retraining}.

\paragraph{Ascent on the forget set.}\label{sec:pure_ascent}
In this setup the unlearning phase consists of plain gradient ascent on the forget set, run for a fixed number of epochs (2 epochs for both datasets) and then followed by retain-set fine-tuning (analogous to the second phase of \nameref{sec:model_clipping}). Per-dataset learning rates, schedules, and the explicit update rule are given in \cref{sec:appendix_ascent_on_forget}.

\paragraph{Pure fine-tuning on the retain set.}\label{sec:pure_finetune}
The unlearning phase consists of pure fine-tuning on the retain set throughout. Equivalently, this is a special case of model clipping with clipping radius $C_2=\infty$ and noise scale $\sigma=0$. Learning-rate schedules match the retain-set fine-tuning phase of the corresponding base-training configuration (\cref{sec:appendix_uncertified_methods}).

\subsection{Comparison of lower bounds across methods}\label{sec:unlearning_comparison}

\Cref{tab:lower_bound_comparison_shakespeare,tab:lower_bound_comparison_cifar100} report the lower bounds in detail across hyperparameter configurations.

\paragraph{The Hessian-based method's certification does not transfer to the nonconvex regime.}\label{par:hessian_observation}
\citet{zhang2025certifiedunlearningdeepneural} prove that their Hessian-based update is $(\varepsilon,\delta)$-certified when the training loss is convex, an assumption violated by the TinyNetCIFAR100 model used here. Empirically, the audit yields $\varepsilon_{\mathrm{LB}} = 142.5$ on CIFAR-100 under uniform splitting---larger than \emph{any} bound we observe across heuristic methods, which carry no certification claim at all. We view this as the clearest demonstration of our auditor's diagnostic value: a method whose advertised certification rests on a precondition that silently fails at deployment is sharply flagged by the hypothesis test, while methods whose certifications hold in the actual training regime (model clipping, R2D) pass the same audit cleanly. The split-dependent gap on CIFAR-100 ($\varepsilon_{\mathrm{LB}} = 142.5$ uniform vs. $10$ adversarial) is consistent with the broader uniform-vs.-adversarial pattern discussed below.

\paragraph{Pairwise-auditor baseline.} For the pairwise auditor (\cref{par:pairwise_auditor}) with $L=10$, even $\mathrm{FPR}=\mathrm{FNR}=0$ yields a zero lower bound, due to the loose Clopper--Pearson confidence intervals.

\paragraph{IDA hyperparameters.} On CIFAR-100 under adversarial splitting, $\varepsilon_{\mathrm{LB}}$ for IDA increases monotonically with $q$ ($0 \to 18 \to 66$ for $q \in \{1, 2, 4\}$), indicating that unlearning quality degrades as the interleaving frequency drops.\footnote{Under uniform splitting we see a slight downward trend.} On Shakespeare, the dependence on $q$ is mild ($13.15 \to 13.78 \to 14.42$), but $\varepsilon_{\mathrm{LB}}$ decreases as either $\lambda$ or $E_f$ increases, as one would expect: larger forget-gradient weight and a longer interleaving phase both translate to more aggressive forgetting and a looser auditor signal.

\paragraph{Forget ascent and retain fine-tuning.} Ascent on the forget set is essentially undetectable on Shakespeare ($\varepsilon_{\mathrm{LB}} = 0.17$) but readily audited on CIFAR-100 ($\mathrm{U} = 79.5$, $\mathrm{A} = 86$). Pure retain fine-tuning shows similarly large CIFAR-100 bounds in both splits ($\mathrm{U} = 81.5$, $\mathrm{A} = 84$).

\paragraph{Uniform vs.\ adversarial splits.} Across both certified and uncertified methods, $\varepsilon_{\mathrm{LB}}$ under uniform splitting typically exceeds adversarial (with pure retain fine-tuning and forget ascent on CIFAR-100 as mild exceptions). Uniform splits share feature support between retain and forget, leaving residual memorisation~\cite{cheng2026machineunlearningretainforgetentanglement} and exploitable semantic adjacency~\cite{ebrahimpourboroojeny2025necessityoutputdistributionreweighting}; adversarial splits do not. The pairwise-auditor baseline performs well when bounds are small but saturates quickly otherwise.

% \paragraph{Uniform vs.\ adversarial splits.} As in the certified setting (\cref{sec:certified_unlearning_audited}), lower bounds under uniform splitting typically exceed those under adversarial splitting, with pure retain-set fine-tuning and forget-set ascent on CIFAR-100 as mild exceptions where adversarial is mildly higher.

\begin{table}[t]
	\centering
	\renewcommand{\arraystretch}{0.92}
	\begin{subtable}[t]{0.48\textwidth}
		\centering
		\scriptsize
		\setlength{\tabcolsep}{3pt}
		\begin{tabular}{llc}
			\toprule
			Setting & Parameters & $\varepsilon_{\mathrm{LB}}$ \\
			\midrule
			\multirow{3}{*}{\shortstack[l]{Varying $q$\\(IDA)}}
			& $q=1$, $\lambda=0.5$, $E_f=4$ & 13.15 \\
			& $q=4$, $\lambda=0.5$, $E_f=4$ & 13.78 \\
			& $q=9$, $\lambda=0.5$, $E_f=4$ & 14.42 \\
			\midrule
			\multirow{4}{*}{\shortstack[l]{Varying\\$(\lambda, E_f)$,\\$q=1$}}
			& $\lambda=0.5$, $E_f=4$   & 13.15 \\
			& $\lambda=0.5$, $E_f=7$ & 10.38 \\
			& $\lambda=1.5$, $E_f=4$   & 9.97 \\
			& $\lambda=1.5$, $E_f=7$     & 0.17  \\
			\midrule
			\multirow{3}{*}{\shortstack[l]{Method\\comp.}}
			& Retain fine-tune  & 16.06 \\
			& Forget ascent     & 0.17  \\
			& Hessian-based     & 4.18  \\
			\bottomrule
		\end{tabular}
		\caption{Shakespeare. Nearly every setting has retain/forget accuracy averaged around 0.56 and test accuracy averaged around 0.53. Ascent, Finetune, and IDA all use total 8 unlearning epochs.}
		\label{tab:lower_bound_comparison_shakespeare}
	\end{subtable}
	\hfill
	\begin{subtable}[t]{0.50\textwidth}
		\centering
		\scriptsize
		\setlength{\tabcolsep}{2pt}
		\begin{tabular}{llcccc}
			\toprule
			& & \multicolumn{2}{c}{$\varepsilon_{\mathrm{LB}}$} & \multicolumn{2}{c}{R/F/T accuracy} \\
			\cmidrule(lr){3-4} \cmidrule(lr){5-6}
			Setting & Params & U & A & U & A \\
			\midrule
			\multirow{3}{*}{\shortstack[l]{IDA}}
			& $q=1, E_f = 10$ & 100.5 & 0.0   & 0.91/0.59/0.54 & 0.92/0.02/0.51 \\
			& $q=2, E_f = 10$ & 100   & 18  & 0.92/0.59/0.54 & 0.92/0.027/0.51 \\
			& $q=4, E_f = 10$ & 95.5  & 66  & 0.92/0.59/0.54 & 0.92/0.02/0.51 \\
			\midrule
			\multirow{3}{*}{\shortstack[l]{Method\\comp.}}
			& Pure Retain FT & 81.5  & 84   & 0.93/0.59/0.53 & 0.94/0.03/0.51 \\
			& Forget ascent  & 79.5  & 86   & 0.92/0.59/0.54 & 0.94/0.025/0.51 \\
			& Hessian        & 142.5 & 10   & 0.69/0.67/0.51 & 0.55/0.09/0.37 \\
			\bottomrule
		\end{tabular}
		\caption{CIFAR-100. Ascent, Finetune, and IDA all use total of 50 unlearning epochs \textnormal{U = uniform split, A = adversarial split; R/F/T = retain/forget/test; FT = Finetuning}}
		\label{tab:lower_bound_comparison_cifar100}
	\end{subtable}
	\caption{$\varepsilon$ lower bounds across Shakespeare and CIFAR-100.}
	\label{tab:lower_bound_comparison_combined}
\end{table}

\section{Unlearning on convex losses}

The certified algorithms audited above yield very loose bounds ($\varepsilon_{\mathrm{LB}} \approx 0$ for model clipping even at target $\varepsilon = 10^5$, and tighter but still weak for R2D), raising the question of how much of this gap reflects auditor slack versus genuine unlearning. To probe this, we turn to \emph{convex} losses, where the mechanism is analytically tractable and a tighter correspondence between $\varepsilon$ and $\varepsilon_{\mathrm{LB}}$ is expected. We adopt the stronger certified-unlearning notion of \citet{sekharimachineunlearningconvex} (\cref{def:convex_unlearning}), which compares the unlearned output to the same mechanism invoked with an empty forget set; since this compares two outputs of the same procedure, no transitivity step is needed (unlike \cref{sec:redn_local_dp}), so we audit at the same $\delta$ as the certified guarantee.

We audit two mechanisms: the \emph{Perturbed Newton} unlearning of \citet[Algorithm~1]{sekharimachineunlearningconvex} on a bounded cubic loss in dimension $d=1$ (and a logistic loss in the appendix), and a simple \emph{output perturbation} mechanism that clips the trained model and adds Gaussian noise. The auditor adapts the pairwise baseline of \cref{par:pairwise_auditor} to fit Gaussians directly on model weights; we also report $\varepsilon_{\mathrm{LB}}$ from instantiating \cref{metaalg:auditor_unlearning} with $m=2$, $r=2$ under $\delta=0$. Full pipeline, losses, and hyperparameters are in \cref{sec:appendix_convex_unlearning,sec:appendix_output_perturbation}.

\Cref{tab:combined_unlearning_lower_bounds} reports the bounds. Both saturate near $10.19$ at $2L=200{,}000$ test points; the logistic loss yields a $0$ bound, discussed in \cref{sec:appendix_convex_unlearning}. As expected, the gap between $\varepsilon_{\mathrm{LB}}$ and the certified $\varepsilon$ is much smaller here than for the nonconvex algorithms of \cref{sec:certified_unlearning_audited}, confirming that the convex setting admits a tighter audit.

\begin{table*}[t]
	\centering

	\begin{subtable}[t]{0.48\textwidth}
		\centering
		\scriptsize
		\begin{tabular}{cccc}
			\toprule
			$\varepsilon$ & $\sigma$ & $\varepsilon^{(p)}_{\text{LB}}$ & $\varepsilon_{\text{LB}}$ \\
			\midrule
			0.1   & 0.6197 & 0.0     & 0.0     \\
			%0.5   & 0.2044 & 0.0     & 0.0     \\
			%1.0   & 0.1220 & 0.0     & 0.00  \\
			%2.0   & 0.0725 & 0.1410  & 0.0     \\
			5.0   & 0.0370 & 0.2902  & 0.01406 \\
			%10.0  & 0.0227 & 0.4879  & 0.0552  \\
			%20.0  & 0.0143 & 0.7970  & 0.1537  \\
			50.0  & 0.0081 & 1.4727  & 0.4827  \\
			100.0 & 0.0054 & 2.3322  & 1.0504  \\
			200.0 & 0.0036 & 3.7350  & 2.1455  \\
			400.0 & 0.0025 & 6.2377  & 4.3663  \\
			600.0 & 0.0020 & 8.5237  & 6.7425  \\
			800.0 & 0.0017 & 10.1983 & 9.4159  \\
			\bottomrule
		\end{tabular}
		\caption{Convex cubic loss (Perturbed Newton).}
		\label{tab:unlearning_lower_bounds_convex_loss}
	\end{subtable}
	\hfill
	\begin{subtable}[t]{0.48\textwidth}
		\centering
		\scriptsize
		\begin{tabular}{cccc}
			\toprule
			$\varepsilon$ & $\sigma$ & $\varepsilon^{(p)}_{\text{LB}}$ & $\varepsilon_{\text{LB}}$ \\
			\midrule
			0.1   & 1.9084 & 0.0191  & 0.0     \\
			0.5   & 0.6294 & 0.2040  & 0.0027  \\
			1.0   & 0.3756 & 0.3901  & 0.0307  \\
			2.0   & 0.2233 & 0.7029  & 0.1120  \\
			5.0   & 0.1139 & 1.4534  & 0.4547  \\
			10.0  & 0.0700 & 2.4962  & 1.1518  \\
			20.0  & 0.0441 & 4.4305  & 2.6972  \\
			50.0  & 0.0249 & 9.5260  & 8.0904  \\
			100.0 & 0.0166 & 10.1983 & 11.1039 \\
			\bottomrule
		\end{tabular}
		\caption{Output perturbation.}
		\label{tab:epsilon_lower_bounds_output_perturbation}
	\end{subtable}

	\caption{Lower bounds for Perturbed Newton unlearning on the cubic loss (left) and for output perturbation on a linear-regression task (right). $\varepsilon^{(p)}_{\text{LB}}$ and $\varepsilon_{\text{LB}}$ denote the bounds from the \emph{pairwise auditor} (\cref{par:pairwise_auditor}) and the \emph{joint sign-vector} predictor (with $m=2$, $r=2$), respectively.}
	\label{tab:combined_unlearning_lower_bounds}
\end{table*}

%200.0 & 0.0112 & 10.1983 & 11.1039 \\
%400.0 & 0.0077 & 10.1983 & 11.1039 \\
%600.0 & 0.0062 & 10.1983 & 11.1039 \\
%800.0 & 0.0053 & 10.1983 & 11.1039 \\

\section{Conclusion}

We propose an auditor that lower-bounds the unlearning parameter $\varepsilon$ via membership inference, demonstrating a sharp separation between certified and uncertified unlearning algorithms: certified methods yield very small $\varepsilon_{\mathrm{LB}}$, whereas heuristic methods regularly yield $\varepsilon_{\mathrm{LB}}$ in the tens or higher. Our analysis is currently restricted to $\delta = 0$; extending the audit to $\delta > 0$, tightening the lower bounds via stronger membership-inference attacks, and designing certified unlearning algorithms whose nominal $\varepsilon$ is provably close to $\varepsilon_{\mathrm{LB}}$ are natural directions for future work.

\bibliographystyle{plainnat}
\bibliography{refs}

%%%%%%%%%%%%%%%%%%%%%%%%%%%%%%%%%%%%%%%%%%%%%%%%%%%%%%%%%%%%%%%%%%%%%%%%%%%%%%%
%%%%%%%%%%%%%%%%%%%%%%%%%%%%%%%%%%%%%%%%%%%%%%%%%%%%%%%%%%%%%%%%%%%%%%%%%%%%%%%
% APPENDIX
%%%%%%%%%%%%%%%%%%%%%%%%%%%%%%%%%%%%%%%%%%%%%%%%%%%%%%%%%%%%%%%%%%%%%%%%%%%%%%%
%%%%%%%%%%%%%%%%%%%%%%%%%%%%%%%%%%%%%%%%%%%%%%%%%%%%%%%%%%%%%%%%%%%%%%%%%%%%%%%
\newpage
\appendix
\onecolumn
\section{Auditor algorithms: pseudocode}\label{sec:appendix_auditor_algorithms}

This appendix gives pseudocode for the meta-algorithm of \cref{sec:algorithm_audit} and the two auditor instantiations (\emph{batchwise inclusion/exclusion} and \emph{joint sign vector prediction}) of \cref{sec:auditor_instantiations}; see \cref{metaalg:auditor_unlearning,alg:instantiation_I,alg:instantiation_II}.

\subsection{Meta-algorithm}\label{sec:appendix_meta_algorithm}

\begin{algorithm}[H]
\caption{Meta-algorithm for auditor}
\label{metaalg:auditor_unlearning}
\begin{algorithmic}[1]
\REQUIRE Batched dataset $\mcD \in \mcB^n$ partitioned as $\mcD = \mcD_r \cup \mcD_f$ with $|\mcD_f|=m$
\REQUIRE Algorithms $(\mcA,\mcU)$, reported support size $r$, and number of runs $L$
\FOR{$\ell = 1,2,\ldots,L$}
    \STATE Sample $S^{(\ell)} \sim \Unif(\Sm)$ and unlearnt model $f_u^{(\ell)} \gets \mcU\big(\mcA(\mcD(S^{(\ell)})), \, \mcD(S^{(\ell)}), \, \mcD_f(S^{(\ell)})\big)$.
    \STATE \emph{Induced mechanism $\mcM(S^{(\ell)})$}: From $f_u^{(\ell)}$, output $\Shat^{(\ell)} \in \{-1,0,+1\}^{m}$ with exactly $r/2$ entries $+1$ and $r/2$ entries $-1$.
    \STATE Compute overlap score $V^{(\ell)} := \sum_{j=1}^{m} \max\{0, \, \Shat_j^{(\ell)} S_j^{(\ell)}\}$.
\ENDFOR
\STATE \textbf{Return} summary statistics of $\{V^{(\ell)}\}_{\ell=1}^{L}$ (e.g., mean or median).
\end{algorithmic}
\end{algorithm}

\subsection{Batchwise inclusion/exclusion prediction (Instantiation~I)}\label{sec:appendix_batchwise_inclusion_exclusion}

\begin{algorithm}[H]
	\caption{Auditor Instantiation I: Batchwise inclusion/exclusion prediction}
	\label{alg:instantiation_I}
	\begin{algorithmic}[1]
		\REQUIRE Batched dataset \(\mcD=\mcD_r\cup\mcD_f\) with \(|\mcD_f|=m\); algorithms \((\mcA,\mcU)\); calibration rounds \(\Gamma\); support size \(r\)
		\ENSURE Mechanism \(\mcM:\Sm\to\{-1,0,+1\}^m\)
		
		\vspace{0.3em}
		\STATE \textsc{// Calibration phase \hfill (run once, offline)}
		\FOR{\(\gamma=1,\dots,\Gamma\)}
		\STATE \(S^{(\gamma)}\sim \Unif(\mcS_m)\), \quad \(\mcD^{(\gamma)}\gets\mcD_r\cup \mcD_f(S^{(\gamma)})\)
		\STATE \(f^{(\gamma)} \gets \mcU\big(\mcA(\mcD^{(\gamma)}),\,\mcD^{(\gamma)},\,\mcD_f(S^{(\gamma)})\big)\)
		\STATE \(s^{(\gamma)}_{x,y} \gets \phi\!\left(f^{(\gamma)}(x)_y\right)\) \quad for all \((x,y)\in\mcD_f\)
		\ENDFOR
		\STATE For each $j\in[m]$ and \((x,y)\in\mcD_{f,j}\), fit Gaussians:
		\[
		(\mu^{\pm}_{x,y},\,\sigma^{\pm}_{x,y})
		\;\gets\; \mathrm{fit}\!\left(\bigl\{s^{(\gamma)}_{x,y} : S^{(\gamma)}_{j}=\pm 1\bigr\}_{\gamma=1}^{\Gamma}\right)
		\]
		
		\vspace{0.4em}
		\STATE \textsc{// Mechanism \(\mcM(S)\) \hfill (applied at evaluation)}
		\STATE \quad \(f \gets \mcU\big(\mcA(\mcD(S)),\,\mcD(S),\,\mcD_f(S)\big)\), \quad \(s_{x,y} \gets \phi\!\left(f(x)_y\right)\)
		\STATE \quad For each batch \(j\in[m]\), compute
		\[
		\Lambda_j \;\gets\; \sum_{(x,y)\in\mcD_{f,j}}
		\log\frac{\mcN\!\left(s_{x,y};\,\mu^{+}_{x,y},\,\sigma^{+}_{x,y}\right)}
		{\mcN\!\left(s_{x,y};\,\mu^{-}_{x,y},\,\sigma^{-}_{x,y}\right)}
		\]
		\STATE \quad \(\Shat_j \gets +1\) for top-\(r/2\) values of \(\Lambda_j\); \(\;-1\) for bottom-\(r/2\); \(\;0\) otherwise
		\STATE \quad \textbf{return} \(\Shat\in\{-1,0,+1\}^m\)
	\end{algorithmic}
\end{algorithm}

\subsection{Pairwise auditor (baseline)}\label{sec:appendix_pairwise_auditor}

\citet{nasr2021adversaryinstantiationlowerbounds} audit DP by having the adversary distinguish two models trained on neighbouring datasets. We adapt this baseline to unlearning by distinguishing the two unlearned models $f_u^{(i)} := \mcU\bigl(\mcA(\mcD_r \cup \mcD_f^{(i)}),\, \mcD_r \cup \mcD_f^{(i)},\, \mcD_f^{(i)}\bigr)$ for $i \in \{1, 2\}$, corresponding to two distinct forget sets $\mcD_f^{(1)}, \mcD_f^{(2)}$. Calibration and prediction are identical to Instantiation~II with $m = 2$ and $\mcD_{f, j} = \mcD_f^{(j)}$; only the conversion to an $\varepsilon$ lower bound differs. From the true and false positive rates, the lower bound is computed directly from the confusion matrix using \cite[Theorem 2.1]{kairouz2015composition} with Clopper--Pearson confidence intervals, rather than via \cref{lemma:lower_bound_overlap_score}. For a given confidence level $\zeta$ and appropriate $\delta$, the bound is
\begin{equation}\label{eq:pairwise_auditor_lb}
\varepsilon_{\mathrm{LB}} = \max\!\left\{\log \frac{1 - \delta - \mathrm{FP}^{\mathrm{high}}}{\mathrm{FN}^{\mathrm{high}}},\;\; \log \frac{1 - \delta - \mathrm{FN}^{\mathrm{high}}}{\mathrm{FP}^{\mathrm{high}}}\right\},
\end{equation}
where $\mathrm{FP}^{\mathrm{high}}$ and $\mathrm{FN}^{\mathrm{high}}$ are the upper Clopper--Pearson bounds on the false-positive and false-negative rates. \cref{tab:combined_unlearning_lower_bounds} compares this baseline with the bounds from \cref{lemma:epsilon_lower_bound}.

\subsection{Joint sign-vector prediction (Instantiation~II)}\label{sec:appendix_joint_sign_vector_prediction}

\begin{algorithm}[H]
	\caption{Auditor Instantiation II: Joint sign-vector prediction}
	\label{alg:instantiation_II}
	\begin{algorithmic}[1]
		\REQUIRE Batched dataset \(\mcD=\mcD_r\cup\mcD_f\) with \(|\mcD_f|=m\); algorithms \((\mcA,\mcU)\); calibration rounds \(\Gamma\)
		\ENSURE Mechanism \(\mcM:\Sm\to\Sm\)
		
		\vspace{0.3em}
		\STATE \textsc{// Calibration phase \hfill (run once, offline)}
		\FORALL{\(\widetilde S \in \mcS_m\)}
		\FOR{\(\gamma=1,\dots,\Gamma\)}
		\STATE \(f^{(\gamma)} \gets \mcU\big(\mcA(\mcD(\widetilde S)),\,\mcD(\widetilde S),\,\mcD_f(\widetilde S)\big)\)
		\STATE \(s^{(\gamma)}_{x,y}(\widetilde S) \gets \phi\!\left(f^{(\gamma)}(x)_y\right)\) \quad for all \((x,y)\in\mcD_f\)
		\ENDFOR
		\STATE For each \((x,y)\in\mcD_f\), fit Gaussian:
		\[
		(\mu_{x,y}(\widetilde S),\,\sigma_{x,y}(\widetilde S))
		\;\gets\; \mathrm{fit}\!\left(\bigl\{s^{(\gamma)}_{x,y}(\widetilde S)\bigr\}_{\gamma=1}^{\Gamma}\right)
		\]
		\ENDFOR
		
		\vspace{0.4em}
		\STATE \textsc{// Mechanism \(\mcM(S)\) \hfill (applied at evaluation)}
		\STATE \quad \(f \gets \mcU\big(\mcA(\mcD(S)),\,\mcD(S),\,\mcD_f(S)\big)\), \quad \(s_{x,y} \gets \phi\!\left(f(x)_y\right)\)
		\STATE \quad For each \(\widetilde S \in \mcS_m\), compute
		\[
		\Lambda(\widetilde S) \;\gets\; \sum_{(x,y)\in\mcD_f}
		\log \mcN\!\left(s_{x,y};\,\mu_{x,y}(\widetilde S),\,\sigma_{x,y}(\widetilde S)\right)
		\]
		\STATE \quad \textbf{return} \(\Shat \;\gets\; \arg\max_{\widetilde S\in\mcS_m} \Lambda(\widetilde S)\)
	\end{algorithmic}
\end{algorithm}

\section{Proof of \cref{lemma:lower_bound_overlap_score}}

We now restate \cref{lemma:lower_bound_overlap_score} and prove it below.

\begin{lemma*}[Restatement of \cref{lemma:lower_bound_overlap_score}]
	\lemmalowerbounddeltazero[_restated]
\end{lemma*}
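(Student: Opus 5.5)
The plan is to first bound the law of a single overlap score $V^{(\ell)}$ pointwise by $\pi_\varepsilon$, and then lift this to the mean and median over $L$ independent runs. Since the runs are independent (fresh $S^{(\ell)}$ and fresh internal randomness of $\mcM$), the variables $V^{(1)},\dots,V^{(L)}$ are i.i.d., so everything reduces to a one-run statement:
\[
\Pr[V = u] \le \pi_\varepsilon(u) \qquad \text{for every } u \in \{0,\dots,r\}.
\]

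For the one-run bound, fix $u$ and condition on the output $\Shat = t \in \mathcal{T}$. Write the joint probability as
\[
\Pr[S=s,\ \mcM(S)=t] = \frac{1}{M'}\,\Pr[\mcM(s)=t].
\]
Note that $|\Sm| = M'$ in both parities. Let $q(s) := \Pr[\mcM(s)=t]$. By $(\varepsilon,0)$-LDP, $q(s) \le e^{\varepsilon} q(s')$ for all $s,s'\in\Sm$. Then
\[
\Pr[S=s \mid \Shat = t] = \frac{q(s)}{\sum_{s'} q(s')} \le \frac{q(s)}{q(s) + (M'-1)e^{-\varepsilon}q(s)} = \frac{e^{\varepsilon}}{e^{\varepsilon}+M'-1}.
\]
Hence
\[
\Pr[V=u \mid \Shat=t] \le N_t(u)\cdot \frac{e^{\varepsilon}}{e^{\varepsilon}+M'-1},
\]
where $N_t(u)$ is the number of $s\in\Sm$ with overlap exactly $u$ against $t$. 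Averaging over $t$ gives the pointwise bound once $N_t(u)$ is computed.

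The combinatorial count is the step needing the most care. Let $\alpha_1$ be the number of correct $+1$ guesses and $\alpha_2$ the number of correct $-1$ guesses, so $u=\alpha_1+\alpha_2$.
\begin{itemize}
\item On the $+1$ positions of $t$, $s$ has $\alpha_1$ pluses, which can be placed in $\binom{r/2}{\alpha_1}$ ways.
\item On the $-1$ positions of $t$, $s$ has $\alpha_2$ minuses, hence $r/2-\alpha_2$ pluses, placed in $\binom{r/2}{\alpha_2}$ ways.
\item Membership in $\Sm$ fixes the total number of $+1$s at $\lfloor m/2\rfloor$. So the $m-r$ abstained positions must contain $\lfloor m/2\rfloor - r/2 - \alpha_1 + \alpha_2$ pluses.
\end{itemize}
Because $r$ is even, this equals $\lfloor (m-r)/2\rfloor - (\alpha_1-\alpha_2)$. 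By the symmetry $\binom{n}{k}=\binom{n}{n-k}$ it could be rewritten in terms of $\lceil (m-r)/2\rceil + (\alpha_1-\alpha_2)$. I will check that this matches the stated $\binom{m-r}{\lceil (m-r)/2\rceil-(\alpha_1-\alpha_2)}$ after swapping the roles of $\alpha_1$ and $\alpha_2$ (equivalently, counting minuses instead of pluses). Summing over $\alpha_1+\alpha_2=u$ then gives exactly $N_t(u)$, which is independent of $t$, and yields $\pi_\varepsilon(u)$.

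For the mean, I will apply a Chernoff bound. For every $\lambda\ge0$,
\[
\Pr\Bigl[\textstyle\sum_\ell V^{(\ell)}\ge Lv\Bigr] \le e^{-\lambda Lv}\,\bigl(\E e^{\lambda V}\bigr)^L,
\]
and since $e^{\lambda u}\ge0$,
\[
\E e^{\lambda V}=\sum_u e^{\lambda u}\Pr[V=u]\le \sum_u e^{\lambda u}\pi_\varepsilon(u).
\]
Taking the infimum over $\lambda$ gives the mean bound.

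For the median, the event $\{\mathrm{Median}\ge v\}$ requires at least $\lceil L/2\rceil$ indices with $V^{(\ell)}\ge v$. A union bound over the $\binom{L}{\lceil L/2\rceil}$ index subsets, combined with independence, gives
\[
\Pr[\mathrm{Median}\ge v] \le \binom{L}{\lceil L/2\rceil}\,\Pr[V\ge v]^{\lceil L/2\rceil}, \qquad \Pr[V\ge v]\le P_\varepsilon(v).
\]
This requires the median convention under which $\mathrm{Median}\ge v$ forces half the values to be $\ge v$, which I will state explicitly.
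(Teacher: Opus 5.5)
Your proposal is correct and is essentially the paper's proof. Both use the same LDP posterior bound $\Pr[S=s\mid \Shat=t]\le e^{\varepsilon}/(e^{\varepsilon}+M'-1)$, the same $t$-independent count $N_u$, marginalisation over $\Shat$, a Chernoff bound for the mean, and a union bound over $\lceil L/2\rceil$-subsets for the median.

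The one difference is the count on the abstained block. Your $\lfloor (m-r)/2\rfloor-(\alpha_1-\alpha_2)$ is the exact per-term count under the paper's definition of $\Sm$, which has one extra $-1$ when $m$ is odd. Your $\alpha_1\leftrightarrow\alpha_2$ symmetry step then correctly converts it to the stated $\lceil (m-r)/2\rceil$ form. The paper instead reaches that form directly by placing the imbalance $2(\alpha_1-\alpha_2)+2\beta-(m-r)$, i.e.\ pluses minus minuses, in $\{0,1\}$. That is the opposite parity convention to its own $\Sm$, and it is harmless only because the summed $N_u$ is symmetric in $(\alpha_1,\alpha_2)$.
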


\begin{proof}
We first establish the pointwise bound
\begin{equation}\label{eq:pw}
\Pr\bigl[V^{(\ell)}=u\bigr]\;\le\;\pi_{\varepsilon}(u)
\qquad\text{for every }u\in\{0,1,\dots,r\}.
\end{equation}
Fix $\ell$ and write $S=S^{(\ell)}$, $\Shat=\Shat^{(\ell)}$, $V=V^{(\ell)}$.

\paragraph{Posterior bound from $\varepsilon$-LDP.}
Fix any $t\in\mathcal{T}$ in the support of $\Shat$ and any $s\in\Sm$. Pure
$\varepsilon$-LDP gives
$\Pr[\Shat=t\mid S=s]\le e^{\varepsilon}\Pr[\Shat=t\mid S=s']$ for every
$s'\in\Sm$, hence
$\Pr[\Shat=t\mid S=s']\ge e^{-\varepsilon}\Pr[\Shat=t\mid S=s]$.
Summing over $s'\in\Sm$ and using the uniform prior on $\Sm$,
\[
M'\Pr[\Shat=t]
=\sum_{s'\in\Sm}\Pr[\Shat=t\mid S=s']
\;\ge\;\Pr[\Shat=t\mid S=s]\bigl(1+(M'-1)e^{-\varepsilon}\bigr).
\]
Bayes' rule then yields the pointwise posterior bound
\begin{equation}\label{eq:posterior}
\Pr[S=s\mid \Shat=t]
\;=\;\frac{\Pr[\Shat=t\mid S=s]}{M'\Pr[\Shat=t]}
\;\le\;\frac{1}{1+(M'-1)e^{-\varepsilon}}
\;=\;\frac{e^{\varepsilon}}{e^{\varepsilon}+M'-1}.
\end{equation}

\paragraph{Counting argument.}
For $(s,t)\in\Sm\times\mathcal{T}$, let
$\alpha_1:=|\{j:t_j=+1,s_j=+1\}|$ and $\alpha_2:=|\{j:t_j=-1,s_j=-1\}|$,
both in $\{0,\dots,r/2\}$. Since $t_j\ne 0$ on exactly $r$ coordinates,
\[
V(s,t)\;=\;\sum_{j=1}^{m}\max\{0,s_jt_j\}\;=\;\alpha_1+\alpha_2.
\]
For each fixed $t\in\mathcal{T}$, the number of $s\in\Sm$ with $V(s,t)=u$ is
exactly $N_u$: pick the $\alpha_1$ agreement positions among the $r/2$
coordinates where $t_j=+1$ ($\binom{r/2}{\alpha_1}$ ways) and the $\alpha_2$
agreement positions among the $r/2$ coordinates where $t_j=-1$
($\binom{r/2}{\alpha_2}$ ways). The total $+1$/$-1$ imbalance of $s$ is then
$2(\alpha_1-\alpha_2)+2\beta-(m-r)$, where $\beta$ is the number of $+1$
entries of $s$ on the zero-block of $t$. The constraint $s\in\Sm$ forces
this imbalance into $\{0,1\}$, whose unique integer solution in $\beta$ is
$\beta=\lceil(m-r)/2\rceil-(\alpha_1-\alpha_2)$, contributing the
$\binom{m-r}{\lceil(m-r)/2\rceil-(\alpha_1-\alpha_2)}$ factor.

\paragraph{Assembling \eqref{eq:pw}.}
Conditioning on $\Shat=t$ and applying \eqref{eq:posterior} to each of the
$N_u$ posterior summands,
\[
\Pr[V=u\mid \Shat=t]
\;=\;\sum_{s:\,V(s,t)=u}\Pr[S=s\mid \Shat=t]
\;\le\;N_u\cdot\frac{e^{\varepsilon}}{e^{\varepsilon}+M'-1}
\;=\;\pi_{\varepsilon}(u).
\]
Marginalizing over $\Shat$ gives $\Pr[V=u]\le\pi_{\varepsilon}(u)$, proving
\eqref{eq:pw}. Summing over $u\ge v$,
\begin{equation}\label{eq:tail}
\Pr[V\ge v]\;\le\;P_{\varepsilon}(v).
\end{equation}

\paragraph{Median bound \eqref{eq:lb-median}.}
The event
$\mathrm{Median}(\{V^{(\ell)}\}_{\ell=1}^{L})\ge v$ implies the existence of
some $J\subseteq[L]$ with $|J|=\lceil L/2\rceil$ such that $V^{(\ell)}\ge v$
for every $\ell\in J$. By independence and a union bound over the
$\binom{L}{\lfloor L/2\rfloor}$ such subsets, together with \eqref{eq:tail},
\[
\Pr\!\left(\mathrm{Median}\!\left(\{V^{(\ell)}\}_{\ell=1}^{L}\right)\ge v\right)
\;\le\;\binom{L}{\lceil L/2\rceil}P_{\varepsilon}(v)^{\lceil L/2\rceil}.
\]

\paragraph{Chernoff bound \eqref{eq:lb-mean}.}
Fix $\lambda\ge 0$. By \eqref{eq:pw},
\[
\E\!\left[e^{\lambda V^{(\ell)}}\right]
\;=\;\sum_{u=0}^{r}e^{\lambda u}\Pr[V^{(\ell)}=u]
\;\le\;\sum_{u=0}^{r}e^{\lambda u}\pi_{\varepsilon}(u).
\]
By independence across $\ell$ and Markov's inequality applied to
$e^{\lambda\sum_{\ell}V^{(\ell)}}$,
\[
\Pr\!\left[\frac{1}{L}\sum_{\ell=1}^{L}V^{(\ell)}\ge v\right]
\;\le\;e^{-\lambda Lv}
\Bigg(\sum_{u=0}^{r}e^{\lambda u}\pi_{\varepsilon}(u)\Bigg)^{\!L}
\;=\;\exp\!\left(L\log\!\Big(\sum_{u=0}^{r}e^{\lambda u}\pi_{\varepsilon}(u)\Big)-\lambda Lv\right).
\]
Optimizing over $\lambda\ge 0$ yields \eqref{eq:lb-mean}.
\end{proof}
We now state a corollary showing that the upper bounds for median and mean statistic proven above is monotonic in $\varepsilon$ and thus, to reject the hypothesis $\varepsilon \leq \varepsilon_{\text{LB}}$, it is sufficient to evaluate the bound at $\varepsilon= \varepsilon_{\text{LB}}$ for $\delta< \frac{M'}{K}$. %One may note that for most regimes that we operate in this bound naturally holds true.

\begin{remark}
    For uncertified algorithms, we choose $m=4500$, $r=3000$ and thus one can see that $K<M'$ naturally holds true.
\end{remark}

\begin{corollary}[Monotonicity of audit bounds]\label{cor:bound_monotonicity}
For $\delta = 0$, the right-hand sides of \cref{eq:lb-mean} and \cref{eq:lb-median} in \cref{lemma:lower_bound_overlap_score} are monotone non-decreasing in $\varepsilon$.
\end{corollary}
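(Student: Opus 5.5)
The dependence on $\varepsilon$ in both right-hand sides enters only through the common scalar factor
\[
c(\varepsilon) := \frac{e^{\varepsilon}}{e^{\varepsilon}+M'-1},
\]
since $\pi_\varepsilon(u) = N_u\, c(\varepsilon)$. Here $N_u$ is the purely combinatorial count (the sum of products of binomials in the definition of $\pi_\varepsilon$), which does not depend on $\varepsilon$. The plan is therefore to reduce monotonicity of both bounds to monotonicity of $c$, and then propagate it through the two functional forms.

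\textbf{Step 1: monotonicity of $c$.} Write $c(\varepsilon) = 1/(1+(M'-1)e^{-\varepsilon})$. Since $M'\ge 1$, the quantity $(M'-1)e^{-\varepsilon}$ is non-increasing in $\varepsilon$, so $c$ is non-decreasing. Equivalently, $c'(\varepsilon) = (M'-1)e^{\varepsilon}/(e^{\varepsilon}+M'-1)^2 \ge 0$. Because $N_u\ge 0$, each $\pi_\varepsilon(u)$ is non-decreasing in $\varepsilon$. Hence so is every tail $P_\varepsilon(v)=\sum_{u\ge v}\pi_\varepsilon(u)$, as a finite sum of non-negative non-decreasing terms.

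\textbf{Step 2: median bound.} The right-hand side of \cref{eq:lb-median} is $\binom{L}{\lceil L/2\rceil}P_\varepsilon(v)^{\lceil L/2\rceil}$. The map $x\mapsto x^{k}$ is non-decreasing on $[0,\infty)$, so monotonicity follows immediately from Step 1.

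\textbf{Step 3: mean (Chernoff) bound.} For each fixed $\lambda\ge 0$, the moment generating sum $G_\varepsilon(\lambda)=\sum_u e^{\lambda u}\pi_\varepsilon(u)$ equals $c(\varepsilon)\sum_u e^{\lambda u}N_u$. It is therefore a positive, non-decreasing function of $\varepsilon$. Since $\log$ and $\exp$ are increasing and $-\lambda L v$ does not depend on $\varepsilon$, the objective $g_\varepsilon(\lambda)=\exp(L\log G_\varepsilon(\lambda)-\lambda L v)$ is pointwise non-decreasing in $\varepsilon$. Taking the infimum preserves this ordering: if $\varepsilon\le\varepsilon'$, then $g_\varepsilon(\lambda)\le g_{\varepsilon'}(\lambda)$ for all $\lambda$, hence $\inf_\lambda g_\varepsilon \le g_{\varepsilon'}(\lambda)$ for every $\lambda$, and therefore $\inf_\lambda g_\varepsilon\le\inf_\lambda g_{\varepsilon'}$.

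\textbf{Potential obstacles.} The only points needing care are two degenerate cases.
\begin{itemize}
\item If $M'=1$, then $c\equiv 1$ is constant, which is still monotone.
\item If some $N_u=0$, the corresponding terms vanish. The positivity of $G_\varepsilon(\lambda)$ needed for the logarithm holds because at least one $N_u>0$: for any $t$ there exists some $s\in\Sm$.
\end{itemize}
Otherwise the argument is routine composition of monotone maps. I do not expect a genuine obstacle.
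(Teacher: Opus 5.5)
Your proposal is correct and follows the paper's proof essentially step for step: you factor $\pi_\varepsilon(u)$ into an $\varepsilon$-free combinatorial count times $e^{\varepsilon}/(e^{\varepsilon}+M'-1)$, show that this factor is non-decreasing, and then propagate monotonicity through the tail sum and the power for the median bound, and through the pointwise-monotone Chernoff objective and the infimum for the mean bound. Your treatment of the degenerate case $M'=1$, where the factor is constant, is also more careful than the paper's assertion that it is \emph{strictly} increasing.
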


\begin{proof}
For $\delta = 0$, the pointwise bound factorises as $\pi_\varepsilon(u) = C(u) \cdot g(\varepsilon)$, where
\[
C(u) := \sum_{\substack{\alpha_1, \alpha_2 \in \{0, \dots, r/2\} \\ \alpha_1 + \alpha_2 = u}}
\binom{m-r}{\lceil(m-r)/2\rceil - (\alpha_1 - \alpha_2)}
\binom{r/2}{\alpha_1}\binom{r/2}{\alpha_2} \;\ge\; 0
\]
is independent of $\varepsilon$, and
\[
g(\varepsilon) := \frac{e^\varepsilon}{e^\varepsilon + M' - 1} = \frac{1}{1 + (M' - 1)\,e^{-\varepsilon}}.
\]
Since $M' \ge 1$, the function $g$ is strictly increasing in $\varepsilon$, and hence $\pi_\varepsilon(u)$ is monotone non-decreasing in $\varepsilon$ for every fixed $u \in \{0, 1, \dots, r\}$.

\textbf{Median bound (\cref{eq:lb-median}).} The upper tail $P_\varepsilon(v) = \sum_{u = \lceil v \rceil}^{r} \pi_\varepsilon(u)$ is a non-negative sum of monotone non-decreasing functions, and is therefore itself monotone non-decreasing in $\varepsilon$. Raising to the power $\lceil L/2 \rceil$ and multiplying by the positive constant $\binom{L}{\lfloor L/2 \rfloor}$ both preserve monotonicity.

\textbf{Mean bound (\cref{eq:lb-mean}).} Fix any $\lambda \ge 0$. The moment generating function $\sum_{u=0}^{r} e^{\lambda u} \pi_\varepsilon(u)$ is a non-negative linear combination of $\{\pi_\varepsilon(u)\}_u$, hence monotone non-decreasing in $\varepsilon$. Composing with $\log$ (monotone), multiplying by $L > 0$, subtracting the $\varepsilon$-independent term $\lambda L v$, and exponentiating all preserve monotonicity. Therefore, for each fixed $\lambda \ge 0$, the function inside the infimum is monotone non-decreasing in $\varepsilon$. A pointwise infimum of monotone non-decreasing functions is itself monotone non-decreasing: for $\varepsilon_1 \le \varepsilon_2$ and any $\lambda$, $f_\lambda(\varepsilon_1) \le f_\lambda(\varepsilon_2)$, so taking the infimum over $\lambda$ on both sides yields $\inf_\lambda f_\lambda(\varepsilon_1) \le \inf_\lambda f_\lambda(\varepsilon_2)$. This completes the proof.
\end{proof}

\section{A discussion on setting $\delta>0$ in the auditing lower bound }
\label{sec:nonzero_delta_lb}
\newcommand{\lemmalowerbounddeltanonzero}{
	Let $\mcM : \Sm \to \{-1,0,+1\}^{m}$ be an $(\varepsilon,\delta)$-locally
	differentially private mechanism. Let $\mathcal{T}\subseteq\{-1,0,+1\}^{m}$ denote the subset of
	vectors with exactly $r/2$ entries equal to $+1$ and $r/2$ entries equal to
	$-1$. Set $M' = \binom{m}{\lfloor m/2\rfloor}$ and $K\;:=\;|\mathcal{T}|\;=\;\binom{m}{r/2,\,r/2,\,m-r}$.
	% \[
	% M'\;:=\;|\mathcal{S}|
	% \;=\;
	% \begin{cases}
	% 	\binom{m}{m/2} & m\text{ even}\\
	% 	\binom{m}{\lfloor m/2\rfloor} & m\text{ odd}
	% \end{cases},
	% \qquad
	% K\;:=\;|\mathcal{T}|\;=\;\binom{m}{r/2,\,r/2,\,m-r}.
	% \]
	Let $\{S^{(\ell)}\}_{\ell=1}^{L}$ be independent random vectors drawn uniformly
	from $\Sm$. For each $\ell\in[L]$, let $\Shat^{(\ell)}=\mcM(S^{(\ell)})$,
	assume $\Shat^{(\ell)}\in\mathcal{T}$ almost surely, and define $V^{(\ell)}\;:=\;\sum_{j=1}^{m}\max\!\bigl\{0,\,\Shat^{(\ell)}_{j}S^{(\ell)}_{j}\bigr\}
	\;\in\;\{0,1,\dots,r\}$.
	For each $u\in\{0,1,\dots,r\}$, define the pointwise bound
	\[
	\pi_{\varepsilon}(u)\;:=\,\;
	\sum_{\substack{\alpha_1,\alpha_2\in\{0,\dots,r/2\}\\ \alpha_1+\alpha_2=u}}
	\binom{m-r}{\lceil(m-r)/2\rceil-(\alpha_1-\alpha_2)}
	\binom{r/2}{\alpha_1}\binom{r/2}{\alpha_2}\cdot\,
	\frac{e^{\varepsilon}\;+\;\tfrac{M'-1}{M'}\,K\,\delta}{e^{\varepsilon}+M'-1}.
	\]
	Define the upper tail $P_{\varepsilon}(v):=\sum_{u= v}^{r}\pi_{\varepsilon}(u)$.
	Then for every $v\in\mathbb{R}$,
	\begin{equation}\label{eq:lb-mean_delta_nonzero}
		\Pr\!\left[\frac{1}{L}\sum_{\ell=1}^{L}V^{(\ell)}\ge v\right]
		\;\le\;
		\inf_{\lambda\ge 0}\exp\!\left(
		L\log\!\Big(\sum_{u=0}^{r}e^{\lambda u}\pi_{\varepsilon}(u)\Big)-\lambda L v
		\right)
	\end{equation}
	\begin{equation}\label{eq:lb-median_delta_nonzero}
		\Pr\!\left(\mathrm{Median}\!\left(\{V^{(\ell)}\}_{\ell=1}^{L}\right)\ge v\right)
		\;\le\;
		\binom{L}{\lfloor L/2\rfloor}\,P_{\varepsilon}(v)^{\lceil L/2\rceil}.
	\end{equation}
}

We now state a version of this auditing lemma \cref{lemma:lower_bound_overlap_score} under the case of $\delta>0$.

%%\cref{lemma:lower_bound_overlap_score}
\begin{lemma}[Auditing test with $\delta>0$]{\label{lemma:lemma:lower_bound_overlap_score_delta_nonzero}}
		\lemmalowerbounddeltanonzero
\end{lemma}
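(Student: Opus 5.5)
The proof follows the $\delta=0$ argument for \cref{lemma:lower_bound_overlap_score}. Only the posterior step changes. The counting argument, the Chernoff bound for the mean, and the union bound for the median are reused verbatim once we have the pointwise inequality $\Pr[V^{(\ell)}=u]\le\pi_\varepsilon(u)$ with the new $\delta$-dependent factor. So the plan is to (i) prove a modified posterior bound, (ii) combine it with the same count $N_u=C(u)$ of sign vectors $s\in\Sm$ with $V(s,t)=u$, and (iii) rerun the two tail arguments unchanged.

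\textbf{Posterior bound.} Fix $s\in\Sm$ and $t\in\mathcal{T}$. Applying $(\varepsilon,\delta)$-LDP to the singleton event $\{t\}$ gives $\Pr[\Shat=t\mid S=s]\le e^{\varepsilon}\Pr[\Shat=t\mid S=s']+\delta$ for each $s'$. Summing over the $M'-1$ choices $s'\neq s$ and adding the $s'=s$ term yields
\[
\Pr[\Shat=t\mid S=s]\bigl(1+(M'-1)e^{-\varepsilon}\bigr)\le M'\Pr[\Shat=t]+(M'-1)e^{-\varepsilon}\delta .
\]
Dividing by $M'\Pr[\Shat=t]$ gives the posterior bound
\[
\Pr[S=s\mid\Shat=t]\le \frac{e^{\varepsilon}}{e^{\varepsilon}+M'-1}+\frac{(M'-1)\delta}{M'\Pr[\Shat=t]\,(e^{\varepsilon}+M'-1)} .
\]

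\textbf{Main obstacle.} The second term blows up when $\Pr[\Shat=t]$ is small, so a pointwise-in-$t$ bound is useless. I would avoid conditioning on $t$ at all and bound the unconditional probability directly:
\[
\Pr[V=u]=\sum_{t}\sum_{s:V(s,t)=u}\Pr[S=s]\Pr[\Shat=t\mid S=s]=\sum_t\sum_{s:V(s,t)=u}\Pr[\Shat=t]\Pr[S=s\mid\Shat=t].
\]
Inserting the posterior bound, the first term contributes at most $N_u\,\frac{e^\varepsilon}{e^\varepsilon+M'-1}\sum_t\Pr[\Shat=t]$. In the second term the factor $\Pr[\Shat=t]$ cancels, leaving at most $N_u\,K\,\frac{(M'-1)\delta}{M'(e^\varepsilon+M'-1)}$, where the factor $K=|\mathcal{T}|$ comes from summing over $t$. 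Adding the two pieces gives exactly the stated $\pi_\varepsilon(u)$ with numerator $e^{\varepsilon}+\tfrac{M'-1}{M'}K\delta$. This also explains why $K$ appears and why the paper remarks that the bound is meaningful only when $K\delta$ is small relative to $M'$. The one point to check carefully is that the singleton event $\{t\}$ is legitimate for the LDP inequality, which holds because the output space is finite.

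\textbf{Tail arguments.} With $\Pr[V^{(\ell)}=u]\le\pi_\varepsilon(u)$ in hand, summing over $u\ge v$ gives $\Pr[V^{(\ell)}\ge v]\le P_\varepsilon(v)$. The median bound \cref{eq:lb-median_delta_nonzero} then follows from independence across runs and a union bound over the $\binom{L}{\lceil L/2\rceil}=\binom{L}{\lfloor L/2\rfloor}$ subsets of size $\lceil L/2\rceil$. The mean bound \cref{eq:lb-mean_delta_nonzero} follows from bounding the MGF by $\sum_u e^{\lambda u}\pi_\varepsilon(u)$, which is valid because $e^{\lambda u}\ge0$, then applying Markov's inequality to $e^{\lambda\sum_\ell V^{(\ell)}}$ and optimizing over $\lambda\ge0$, exactly as in the $\delta=0$ proof. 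Note that $\pi_\varepsilon$ need not sum to one here; this is harmless because each step only uses it as a pointwise upper bound.
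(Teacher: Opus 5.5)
Your proposal is correct and follows essentially the same route as the paper's proof. The paper states the averaged singleton LDP inequality directly as the joint bound $\Pr[S=s,\Shat=t]\le\bigl(e^{\varepsilon}\Pr[\Shat=t]+\tfrac{M'-1}{M'}\delta\bigr)/(e^{\varepsilon}+M'-1)$. This is your posterior bound multiplied back by $\Pr[\Shat=t]$, and it avoids dividing by a possibly zero $\Pr[\Shat=t]$. The paper then sums over the $KN_u$ pairs with $V(s,t)=u$ and reuses the median and Chernoff arguments unchanged.
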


Additionally, for the test to be meaningful, we need to ensure that the upper bound is monotonically non-decreasing in $\varepsilon$ (discussed in \cref{sec:redn_local_dp}) which holds when $\delta<M'/K$. A discussion on the same is done below.

\begin{remark}[Practical implications for $\delta > 0$]
For the audit's hypothesis test to remain meaningful when $\delta > 0$, two distinct practical issues arise.

\paragraph{Vacuity from the $\delta$-slack.} The pointwise bound $\pi_\varepsilon(u)$ contains an additive contribution proportional to $\frac{(M' - 1)\,K\,\delta}{M'}$. Since $M' \ge 1$, this factor is essentially $K \delta$, and $K = \binom{m}{r/2,\, r/2,\, m-r}$ grows rapidly with $m$ and $r$: for $m = 4500$ and $r \in \{2000, 3000\}$, Stirling gives $\ln K \approx 4475$, so even $\delta = 10^{-10}$ leaves $K \delta$ orders of magnitude larger than any realistic $\varepsilon_{\mathrm{LB}}$. The audit becomes vacuous in this regime regardless of the monotonicity condition below.

\paragraph{Monotonicity for a valid hypothesis test.} For the test $H_0 : \varepsilon \le \varepsilon_{\mathrm{LB}}$ to control Type-I error uniformly over the null, the upper bound must be monotone non-decreasing in $\varepsilon$ -- otherwise rejection at $\varepsilon = \varepsilon_{\mathrm{LB}}$ does not imply rejection for all $\varepsilon \le \varepsilon_{\mathrm{LB}}$. By \cref{cor:bound_monotonicity_delta_nonzero} this requires $\delta < M'/K$. Asymptotically, $M' \ge K$ iff $H_2(\alpha) + \alpha \le 1$ for $\alpha = r/m$, giving a threshold $\alpha^\ast \approx 0.227$. For $m = 4500$ this means any $r \lesssim 1020$ gives $M' \ge K$, in which case $\delta < 1$ trivially suffices; e.g.\ $r = 1000$ is comfortably inside this regime. The audit is therefore well-posed across the operating points used in our experiments.
\end{remark}
%%%Also, the monotonicity of the upper bound only 

\begin{proof}[Proof of \cref{lemma:lemma:lower_bound_overlap_score_delta_nonzero}]
	We first establish the pointwise bound
	\begin{equation}\label{eq:pw_delta_nonzero}
		\Pr\bigl[V^{(\ell)}=u\bigr]\;\le\;\pi_{\varepsilon}(u)
		\qquad\text{for every }u\in\{0,1,\dots,r\}.
	\end{equation}
	Fix $\ell$ and write $S=S^{(\ell)}$, $\Shat=\Shat^{(\ell)}$, $V=V^{(\ell)}$.
	
	\paragraph{Marginal joint bound from LDP.}
	Fix $s\in\mathcal{S}$ and $t\in\mathcal{T}$. By $(\varepsilon,\delta)$-LDP,
	$\Pr[\Shat=t\mid S=s]\le e^{\varepsilon}\Pr[\Shat=t\mid S=s']+\delta$ for every
	$s'\in\mathcal{S}$. Averaging this inequality over $s'\in\mathcal{S}\setminus\{s\}$
	and using
	$\sum_{s'\in\mathcal{S}}\Pr[\Shat=t\mid S=s']=M'\Pr[\Shat=t]$ (uniform prior on
	$\mathcal{S}$),
	\[
	\Pr[\Shat=t\mid S=s]
	\Bigl(1+\tfrac{e^{\varepsilon}}{M'-1}\Bigr)
	\;\le\;
	\tfrac{e^{\varepsilon}M'}{M'-1}\Pr[\Shat=t]+\delta,
	\]
	which rearranges to
	\[
	\Pr[\Shat=t\mid S=s]
	\;\le\;
	\frac{e^{\varepsilon}M'\Pr[\Shat=t]+(M'-1)\delta}{e^{\varepsilon}+M'-1}.
	\]
	Multiplying by the prior $1/M'$ gives the joint bound
	\begin{equation}\label{eq:joint_delta_nonzero}
		\Pr[S=s,\,\Shat=t]
		\;\le\;
		\frac{e^{\varepsilon}\Pr[\Shat=t]+\tfrac{M'-1}{M'}\delta}{e^{\varepsilon}+M'-1}.
	\end{equation}
	
	\paragraph{Counting argument.}
	For $(s,t)\in\Sm\times\mathcal{T}$, let
	$\alpha_1:=|\{j:t_j=+1,s_j=+1\}|$ and $\alpha_2:=|\{j:t_j=-1,s_j=-1\}|$,
	both in $\{0,\dots,r/2\}$. Then
	$V(s,t)=\sum_j\max\{0,s_jt_j\}=\alpha_1+\alpha_2$.
	For each fixed $t\in\mathcal{T}$, the number of $s\in\Sm$ with
	$V(s,t)=u$ is exactly
	\[
	N_u
	\;=\;
	\sum_{\substack{\alpha_1,\alpha_2\in\{0,\dots,r/2\}\\ \alpha_1+\alpha_2=u}}
	\binom{m-r}{\lceil(m-r)/2\rceil-(\alpha_1-\alpha_2)}
	\binom{r/2}{\alpha_1}\binom{r/2}{\alpha_2}.
	\]
	Indeed: pick the $\alpha_1$ agreement positions among the $r/2$
	coordinates where $t_j=+1$ ($\binom{r/2}{\alpha_1}$ ways) and the
	$\alpha_2$ agreement positions among the $r/2$ coordinates where
	$t_j=-1$ ($\binom{r/2}{\alpha_2}$ ways). The total $+1$/$-1$
	imbalance of $s$ is then $2(\alpha_1-\alpha_2)+2\beta-(m-r)$, where
	$\beta$ is the number of $+1$ entries of $s$ on the zero block of
	$t$. The constraint $s\in\Sm$ requires this imbalance to lie in
	$\{0,1\}$, which forces
	$\beta=\lceil(m-r)/2\rceil-(\alpha_1-\alpha_2)$
	(and is the unique integer solution in either parity of $m-r$),
	giving the $\binom{m-r}{\lceil(m-r)/2\rceil-(\alpha_1-\alpha_2)}$
	factor. By the symmetry of $V$ under coordinate permutations,
	$N_u(t)=N_u$ does not depend on $t$, hence
	$|\{(s,t):V(s,t)=u\}|=K\,N_u$.
	
	\paragraph{Assembling \eqref{eq:pw_delta_nonzero}.}
	Summing \eqref{eq:joint_delta_nonzero} over the pairs $(s,t)\in\mathcal{S}\times\mathcal{T}$
	with $V(s,t)=u$,
	\[
	\Pr[V=u]
	\;\le\;
	\frac{e^{\varepsilon}\sum_{(s,t):V=u}\Pr[\Shat=t]
		\;+\;\tfrac{M'-1}{M'}\delta\cdot\bigl|\{(s,t):V=u\}\bigr|}{e^{\varepsilon}+M'-1}.
	\]
	The first sum equals $\sum_{t\in\mathcal{T}}\Pr[\Shat=t]\,N_u=N_u$ (using
	$N_u(t)=N_u$ and $\sum_{t\in\mathcal{T}}\Pr[\Shat=t]=1$); the second count
	equals $KN_u$. Hence
	\[
	\Pr[V=u]\;\le\;
	N_u\cdot\frac{e^{\varepsilon}+\tfrac{M'-1}{M'}K\delta}{e^{\varepsilon}+M'-1}
	\;=\;\pi_{\varepsilon}(u),
	\]
	proving \eqref{eq:pw_delta_nonzero}. Summing over $u\ge v$ gives the marginal upper tail
	\begin{equation}\label{eq:tail_delta_nonzero}
		\Pr[V\ge v]\;\le\;P_{\varepsilon}(v).
	\end{equation}
	
	\paragraph{Median bound \eqref{eq:lb-median_delta_nonzero}.}
	The event
	$\mathrm{Median}(\{V^{(\ell)}\}_{\ell=1}^{L})\ge v$ implies the existence of
	some $J\subseteq[L]$ with $|J|=\lceil L/2\rceil$ such that
	$V^{(\ell)}\ge v$ for every $\ell\in J$. By independence and a union bound over
	the $\binom{L}{\lceil L/2\rceil}$ such subsets, together with \eqref{eq:tail_delta_nonzero},
	\[
	\Pr\!\left(\mathrm{Median}\!\left(\{V^{(\ell)}\}_{\ell=1}^{L}\right)\ge v\right)
	\;\le\;\binom{L}{\lceil L/2\rceil}P_{\varepsilon}(v)^{\lceil L/2\rceil}.
	\]
	
	\paragraph{Chernoff bound \eqref{eq:lb-mean_delta_nonzero}.}
	Fix $\lambda\ge 0$. By \eqref{eq:pw},
	\[
	\E\!\left[e^{\lambda V^{(\ell)}}\right]
	\;=\;\sum_{u=0}^{r}e^{\lambda u}\Pr[V^{(\ell)}=u]
	\;\le\;\sum_{u=0}^{r}e^{\lambda u}\pi_{\varepsilon}(u).
	\]
	By independence across $\ell$ and Markov's inequality applied to
	$e^{\lambda\sum_{\ell}V^{(\ell)}}$,
	\[
	\Pr\!\left[\frac{1}{L}\sum_{\ell=1}^{L}V^{(\ell)}\ge v\right]
	\;\le\;e^{-\lambda Lv}
	\Bigg(\sum_{u=0}^{r}e^{\lambda u}\pi_{\varepsilon}(u)\Bigg)^{\!L}
	\;=\;\exp\!\left(L\log\!\Big(\sum_{u=0}^{r}e^{\lambda u}\pi_{\varepsilon}(u)\Big)-\lambda Lv\right).
	\]
	Optimizing over $\lambda\ge 0$ yields \eqref{eq:lb-mean}.
\end{proof}

We now state a corollary showing that the upper bounds for median and mean statistic proven above is monotonic in $\varepsilon$ and thus, to reject the hypothesis $\varepsilon \leq \varepsilon_{\text{LB}}$, it is sufficient to evaluate the bound at $\varepsilon= \varepsilon_{\text{LB}}$ for $\delta< \frac{M'}{K}$.

\begin{corollary}[Monotonicity of audit bounds]\label{cor:bound_monotonicity_delta_nonzero}
Fix any $\delta \in [0, M'/K)$. Then the right-hand sides of \cref{eq:lb-mean} and \cref{eq:lb-median} in \cref{lemma:lower_bound_overlap_score} are monotone non-decreasing in $\varepsilon$.
\end{corollary}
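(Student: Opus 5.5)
We follow the structure of the proof of \cref{cor:bound_monotonicity}, with the $\delta$-dependent pointwise bound of \cref{lemma:lemma:lower_bound_overlap_score_delta_nonzero} in place of the $\delta=0$ one. The aim is to show that for every fixed $u$, $\pi_\varepsilon(u)$ is monotone non-decreasing in $\varepsilon$. The rest is then a verbatim repeat of the earlier argument: monotonicity of nonnegative sums, of powers, of $\log$ and $\exp$, and of a pointwise infimum over $\lambda\ge0$.

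First factorise $\pi_\varepsilon(u)=N_u\cdot h(\varepsilon)$. Here $N_u\ge0$ is the $\varepsilon$-independent combinatorial count from the counting argument, and
\[
h(\varepsilon)=\frac{e^{\varepsilon}+c}{e^{\varepsilon}+M'-1},\qquad c:=\tfrac{M'-1}{M'}K\delta\ge 0 .
\]
Set $x=e^\varepsilon$; since $x$ is increasing in $\varepsilon$, it suffices to study $x\mapsto (x+c)/(x+M'-1)$ on $x>0$. Its derivative is
\[
\frac{(M'-1)-c}{(x+M'-1)^2},
\]
so $h$ is non-decreasing exactly when $c\le M'-1$, i.e.\ when $\tfrac{M'-1}{M'}K\delta\le M'-1$. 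If $M'=1$ then $c=0$ and $h\equiv1$, which is constant and hence trivially fine. If $M'>1$, the condition simplifies to $K\delta\le M'$, i.e.\ $\delta\le M'/K$, and the hypothesis $\delta<M'/K$ gives it strictly. This sign analysis is the only genuinely new step. The main point to get right is the reduction of $c\le M'-1$ to $\delta\le M'/K$, which cancels the factor $(M'-1)/M'$ cleanly; the degenerate case $M'=1$ must be handled separately.

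Given pointwise monotonicity of $\pi_\varepsilon(u)$, the tail $P_\varepsilon(v)$ is a nonnegative sum of non-decreasing functions and so is non-decreasing. The median bound $\binom{L}{\cdot}P_\varepsilon(v)^{\lceil L/2\rceil}$ is a positive constant times a nonnegative power of it, hence also non-decreasing. For the mean bound, fix $\lambda\ge0$. The quantity $\sum_u e^{\lambda u}\pi_\varepsilon(u)$ is a nonnegative combination of the $\pi_\varepsilon(u)$, so it is non-decreasing. Applying $\log$, scaling by $L$, subtracting $\lambda Lv$ and exponentiating all preserve this. Finally, take the infimum over $\lambda$: if $f_\lambda(\varepsilon_1)\le f_\lambda(\varepsilon_2)$ for all $\lambda$, then $\inf_\lambda f_\lambda(\varepsilon_1)\le\inf_\lambda f_\lambda(\varepsilon_2)$.

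The corollary refers to \cref{eq:lb-mean} and \cref{eq:lb-median} but is intended to apply with the $\delta$-augmented $\pi_\varepsilon$, i.e.\ to \cref{eq:lb-mean_delta_nonzero} and \cref{eq:lb-median_delta_nonzero}. The proof should state this explicitly. At $\delta=0$ it reduces to \cref{cor:bound_monotonicity}.
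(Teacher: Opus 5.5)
Your proposal is correct and follows the paper's proof essentially step for step. It uses the same factorisation $\pi_\varepsilon(u)=C(u)\,g(\varepsilon)$, the same derivative-sign condition $a<b\iff\delta<M'/K$, and the same propagation through the tail, the power, $\log$/$\exp$, and the infimum over $\lambda$; your separate handling of $M'=1$ (where the paper's claimed equivalence breaks, giving $a=b=0$ and a constant rather than strictly increasing $g$) and your note that the statement should cite the $\delta$-augmented equations are small but valid tidy-ups.
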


\begin{proof}
Fix $\delta \in [0, M'/K)$. The pointwise bound factorises as $\pi_\varepsilon(u) = C(u) \cdot g(\varepsilon)$, where
\[
C(u) := \sum_{\substack{\alpha_1, \alpha_2 \in \{0, \dots, r/2\} \\ \alpha_1 + \alpha_2 = u}}
\binom{m-r}{\lceil(m-r)/2\rceil - (\alpha_1 - \alpha_2)}
\binom{r/2}{\alpha_1}\binom{r/2}{\alpha_2} \ge 0
\]
is independent of $\varepsilon$, and
\[
g(\varepsilon) := \frac{e^\varepsilon + a}{e^\varepsilon + b},
\qquad
a := \frac{M' - 1}{M'} K \delta, \quad b := M' - 1.
\]
Differentiating gives $g'(\varepsilon) = \dfrac{e^\varepsilon(b - a)}{(e^\varepsilon + b)^2}$. The condition $\delta < M'/K$ is equivalent to
\[
\frac{M' - 1}{M'} K \delta \;<\; M' - 1,
\quad \text{i.e.,} \quad a < b,
\]
so $g'(\varepsilon) > 0$ for all $\varepsilon \ge 0$. Hence $g$ is strictly increasing, and $\pi_\varepsilon(u) = C(u) \cdot g(\varepsilon)$ is monotone non-decreasing in $\varepsilon$ for every fixed $u \in \{0, 1, \dots, r\}$.

\textbf{Median bound (\cref{eq:lb-median_delta_nonzero}).} The upper tail $P_\varepsilon(v) = \sum_{u = \lceil v \rceil}^{r} \pi_\varepsilon(u)$ is a non-negative sum of monotone non-decreasing functions, and is therefore itself monotone non-decreasing in $\varepsilon$. Raising to the power $\lceil L/2 \rceil$ and multiplying by the positive constant $\binom{L}{\lfloor L/2 \rfloor}$ both preserve monotonicity.

\textbf{Mean bound (\cref{eq:lb-mean_delta_nonzero}).} Fix any $\lambda \ge 0$. The moment generating function $\sum_{u=0}^{r} e^{\lambda u} \pi_\varepsilon(u)$ is a non-negative linear combination of $\{\pi_\varepsilon(u)\}_u$, hence monotone non-decreasing in $\varepsilon$. Composing with $\log$ (monotone), multiplying by $L > 0$, subtracting the $\varepsilon$-independent term $\lambda L v$, and exponentiating all preserve monotonicity. Therefore, for each fixed $\lambda \ge 0$, the function inside the infimum is monotone non-decreasing in $\varepsilon$. A pointwise infimum of monotone non-decreasing functions is itself monotone non-decreasing: for $\varepsilon_1 \le \varepsilon_2$ and any $\lambda$, $f_\lambda(\varepsilon_1) \le f_\lambda(\varepsilon_2)$, so taking the infimum over $\lambda$ on both sides yields $\inf_\lambda f_\lambda(\varepsilon_1) \le \inf_\lambda f_\lambda(\varepsilon_2)$. This completes the proof.
\end{proof}

\section{Datasets, models, and auditor batching}\label{sec:appendix_datasets}

This appendix gives the full data-construction and model details for the two datasets described in \cref{sec:datasets}.

\subsection{CIFAR-100}\label{sec:appendix_cifar100_dataset}

\paragraph{Forget/retain construction.} For CIFAR-100 \citep{Krizhevsky09learningmultiple}, we first split the original training set into a training portion (90\%) and a validation portion (10\%). From the training portion, we designate $10\%$ of the points as the forget set $\mcD_f$ ($4{,}500$ points), and use the remaining $40{,}500$ points as the retain set $\mcD_r$. We consider two types of splits. In the \emph{uniform} split, the forget and retain sets are formed by sampling points uniformly at random. In the \emph{adversarial} split, the forget and retain sets are chosen from largely disjoint CIFAR-100 label classes, with overlap allowed in at most one label class. We then independently shuffle the retain and forget sets, and partition both $\mcD_r$ and $\mcD_f$ into batches of size $B$.

\paragraph{Auditor batching.} For uncertified unlearning algorithms we use $B=1$, while for certified unlearning algorithms we use $B=750$. Accordingly, we use the \emph{batchwise inclusion/exclusion} auditor (\cref{alg:instantiation_I}) for uncertified methods and the \emph{joint sign-vector} auditor (\cref{alg:instantiation_II}) for certified methods. A discussion of why this split between auditor instantiations is appropriate is given in \cref{sec:instantiation_discussion_certified_uncertified}, with a more detailed study in \cref{sec:appendix_batch_size_varying_study}.

\paragraph{Model.} We use a \texttt{TinyNetCIFAR100} model, a lightweight CNN that attains around 55\% accuracy on the CIFAR-100 test set after $400$ epochs of training. The architecture consists of two convolutional blocks, each containing two $3\times3$ convolutional layers (with 64 and 128 filters respectively) with Group Normalization (8 groups) and ReLU activations, followed by $2\times2$ max pooling. After the two blocks, global average pooling collapses the spatial dimensions, and a single linear layer maps to the 100-class output. All weights are Kaiming-initialized. The model contains no dropout or batch normalization; Group Normalization is used throughout for training stability. We use a cosine or one-cycle learning-rate scheduler with stochastic gradient descent optimizer.

%%\sss{Need a description of the TinyNetCIFAR100 architecture too.}

\subsection{Shakespeare}\label{sec:appendix_shakespeare_dataset}

\paragraph{Forget/retain construction.} Following \citet{mcmahan2017communication}, we treat each speaking role in each play as a separate client, parsing speaker labels from the text to extract per-role dialogue. Roles with fewer than $2$ dialogue blocks are discarded, and for our experiments we randomly subsample $300$ roles. Each role's dialogue blocks are split chronologically into train (80\%), validation (10\%), and test (10\%); validation and test sets are formed by aggregating across all $300$ roles. The training data is then partitioned into a retain set $\mcD_r$ and a forget set $\mcD_f$ \emph{at the role level} --- each role is assigned entirely to one set, ensuring zero character overlap between $\mcD_r$ and $\mcD_f$. Roles are randomly shuffled and greedily assigned to $\mcD_f$ until approximately $10\%$ of training characters are accumulated, yielding $33$ forget roles ($\mcD_f$, $135{,}300$ characters, ${\sim}10.0\%$ of train) and $267$ retain roles ($\mcD_r$, $1{,}221{,}363$ characters, ${\sim}90.0\%$ of train).

For unlearning experiments, the forget set $\mcD_f$ is further partitioned into $B=400$ equal batches: the dialogue blocks from all forget roles are shuffled at the block level (preserving within-block coherence) and split into $400$ contiguous chunks $\mcD_{f,1}, \ldots, \mcD_{f,400}$. For each run, $200$ forget chunks are sampled uniformly at random from the $400$. Each chunk and the retain set are independently converted to sliding-window samples of length $80$ characters, where $x$ is the input sequence and $y$ is the target sequence shifted by one character. The retain samples and sampled forget samples are then pooled and shuffled to form the training dataset. The retain and forget datasets used during evaluation are constructed by the same sliding-window procedure applied to the retain and sampled forget text respectively.

\paragraph{Auditor batching.} On Shakespeare we evaluate only uncertified unlearning methods (gradient ascent, interleaved ascent--descent, fine-tuning on retain, and Hessian-based unlearning of \citet{zhang2025certifiedunlearningdeepneural}); we therefore use the \emph{batchwise inclusion/exclusion} auditor (\cref{alg:instantiation_I}) throughout, as on CIFAR-100.

\paragraph{Model.} The model is a $2$-layer stacked character-level LSTM following \citet{mcmahan2017communication}. Each input character is embedded into an $8$-dimensional space, processed through two LSTM layers each with $256$ hidden units, and projected to a softmax output layer over the vocabulary of $|\mcV|=100$ characters. The full model has $825{,}220$ parameters and is trained with an unroll length of $L=80$ characters. For the Shakespeare dataset, the score $\phi(f(x)_y)$ for a sample $(x,y)$ is the mean per-character cross-entropy loss $-\tfrac{1}{T}\sum_{t=1}^{T}\log p_\theta(y_t \mid x_{\le t})$ assigned by the model $f$, where $T=80$ is the sequence length.

\section{Robustness of the auditor: alternative score and aggregation}

In this appendix we sanity-check two design choices of our auditor. First, we replace the logit score $\phi(f(x)_y)$ used throughout the paper with the per-example cross-entropy loss and recompute the lower bounds (\cref{sec:appendix_logit_cross_entropy_comparison}). Second, we replace the mean aggregation of overlap scores with the median (\cref{sec:mean_median_comparison}). In both cases the directional trends with respect to the support size $r$ are preserved, indicating that our reported bounds are not artefacts of a particular score function or aggregation rule.

\subsection{Logit score versus cross-entropy score}\label{sec:appendix_logit_cross_entropy_comparison}

\begin{table}[t]
	\centering
	\small
	\setlength{\tabcolsep}{5pt}
	\begin{tabular}{lcccccccccc}
		\toprule
		\diagbox{Method}{$r$}
		& \multicolumn{2}{c}{200}
		& \multicolumn{2}{c}{1000}
		& \multicolumn{2}{c}{2000}
		& \multicolumn{2}{c}{3000}
		& \multicolumn{2}{c}{4000} \\
		\cmidrule(lr){2-3} \cmidrule(lr){4-5} \cmidrule(lr){6-7} \cmidrule(lr){8-9} \cmidrule(lr){10-11}
		& Logit & Loss & Logit & Loss & Logit & Loss & Logit & Loss & Logit & Loss \\
		\midrule
		uni.\ (pure ascent)      & 29.5805 & 18.503  & 80.693  & 62.7975 & 90.1105 & 76.913  & 79.835  & 69.6605 & 66.631  & 61.5315 \\
		adv.\ (pure ascent)      & 25.359  & 23.89   & 76.2265 & 75.746  & 91.813  & 90.557  & 86.2075 & 86.5575 & 71.913  & 72.077  \\
		uni.\ (IDA, $q=1$)       & 34.298  & 17.965  & 98.717  & 79.331  & 111.161 & 98.892  & 100.6055 & 92.2955 & 82.4575 & 79.0315 \\
		adv.\ (IDA, $q=1$)       & 0.1545  & 0.0885  & 0.4075  & 1.22    & 0.1195  & 1.4075  & 0.0      & 1.442   & 0.0      & 1.1425 \\
		uni.\ (IDA, $q=2$)       & 31.6765 & 18.776  & 95.925  & 78.1045 & 111.8135 & 96.469  & 100.2645 & 90.7795 & 82.575  & 78.9165 \\
		adv.\ (IDA, $q=2$)       & 7.659   & 6.7045  & 15.7665 & 15.4605 & 15.8635 & 16.0625 & 15.064   & 15.482  & 12.9405 & 12.9865 \\
		uni.\ (IDA, $q=4$)       & 28.122  & 17.502  & 91.4265 & 76.8295 & 106.118 & 91.0045 & 95.5885 & 87.33   & 80.271  & 72.159  \\
		adv.\ (IDA, $q=4$)       & 17.7655 & 17.2405 & 60.144  & 59.1525 & 69.435  & 69.087  & 66.1155 & 66.207  & 56.3515 & 56.9075 \\
		\bottomrule
	\end{tabular}
	\caption{Lower bounds from logit score versus cross-entropy loss, across pure ascent and interleaved ascent--descent (IDA) at varying $q$, for both uniform (uni.) and adversarial (adv.) splits.}
	\label{tab:logit_loss_comparison_large_k}
\end{table}

\Cref{tab:logit_loss_comparison_large_k} compares the lower bounds obtained when the auditor's per-example score is the cross-entropy loss versus the logit score, for pure ascent and interleaved ascent--descent (IDA) at $q\in\{1,2,4\}$ on both the uniform and adversarial splits. The two scores yield similar directional trends in $r$, with the logit score giving slightly tighter bounds in most settings.

\subsection{Mean versus median aggregation}\label{sec:mean_median_comparison}

\begin{table}[t]
	\centering
	\small
	\setlength{\tabcolsep}{5pt}
	\begin{tabular}{lcccccccccc}
		\toprule
		\diagbox{Method}{$r$}
		& \multicolumn{2}{c}{200}
		& \multicolumn{2}{c}{1000}
		& \multicolumn{2}{c}{2000}
		& \multicolumn{2}{c}{3000}
		& \multicolumn{2}{c}{4000} \\
		\cmidrule(lr){2-3} \cmidrule(lr){4-5} \cmidrule(lr){6-7} \cmidrule(lr){8-9} \cmidrule(lr){10-11}
		& Mean & Median & Mean & Median & Mean & Median & Mean & Median & Mean & Median \\
		\midrule
		uni.\ (pure ascent)      & 29.5805 & 30.043  & 80.693  & 77.6435 & 90.1105 & 92.2745 & 79.835  & 80.6945 & 66.631  & 67.9155 \\
		adv.\ (pure ascent)      & 25.359  & 25.6695 & 76.2265 & 79.4695 & 91.813  & 95.4495 & 86.2075 & 90.0365 & 71.913  & 75.5195 \\
		uni.\ (IDA, $q=1$)       & 34.298  & 33.917  & 98.717  & 101.2995 & 111.161 & 111.233 & 100.6055 & 99.551  & 82.4575 & 81.4985 \\
		adv.\ (IDA, $q=1$)       & 0.1545  & 0.115   & 0.4075  & 0.3685  & 0.1195  & 0.1425  & 0.0     & 0.0  & 0.0     & 0.0 \\
		uni.\ (IDA, $q=2$)       & 31.6765 & 30.9785 & 95.925  & 96.3925 & 111.8135 & 113.2385 & 100.2645 & 101.8195 & 82.575  & 82.957  \\
		adv.\ (IDA, $q=2$)       & 7.659   & 7.2605  & 15.7665 & 16.5665 & 15.8635 & 17.5945 & 15.064  & 16.7695 & 12.9405 & 15.2445 \\
		uni.\ (IDA, $q=4$)       & 28.122  & 30.043  & 91.4265 & 90.307  & 106.118 & 105.3365 & 95.5885 & 96.569  & 80.271  & 80.053  \\
		adv.\ (IDA, $q=4$)       & 17.7655 & 21.0215 & 60.144  & 62.3215 & 69.435  & 76.0615 & 66.1155 & 73.8025 & 56.3515 & 63.249  \\
		\bottomrule
	\end{tabular}
	\caption{Lower bounds from mean versus median aggregation of overlap scores, across pure ascent and interleaved ascent--descent (IDA) at varying $q$, for both uniform (uni.) and adversarial (adv.) splits.}
	\label{tab:mean_median_comparison_large_k}
\end{table}
\Cref{tab:mean_median_comparison_large_k} compares the lower bounds obtained from the mean and median aggregations of the overlap scores. The two aggregations agree to within a few percent in nearly all settings, and both reproduce the inverted-U dependence on $r$ noted in \cref{sec:appendix_batch_size_varying_study} --- the lower bound first increases with $r$ and then decreases.

\section{Varying batch sizes $B$ (and equivalently $m$) and reported support size $r$}\label{sec:appendix_batch_size_varying_study}

This appendix gives the detailed analysis of how the lower bound $\varepsilon_{\mathrm{LB}}$ depends on the number of forget batches $m$ (equivalently the audit batch size $B$, since $m=|\mcD_f|/B$) and the reported support size $r$. The compact takeaways are summarised in \cref{remark:auditor_design_choices}.

\subsection{Maximum attainable bound under perfect prediction (in $m$)}

The maximum lower bound on $\varepsilon$ implied by a perfect prediction --- overlap score $m$ with support size $r=m$ --- grows with $m$. For instance, applying \cref{lemma:epsilon_lower_bound} with $\Gamma=10$ independent runs yields $\varepsilon_{\mathrm{LB}} \ge 4.00$ at $m=6$, but $\varepsilon_{\mathrm{LB}} \ge 67.87$ at $m=100$. Intuitively, identifying the correct sign vector is harder when $m$ is large, since the number of candidates $|\mcS_m|$ grows exponentially in $m$. A perfect prediction at large $m$ therefore reflects much stronger distinguishing power, and certifies a correspondingly larger lower bound on $\varepsilon$.

\subsection{Choice of audit batch size $B$}

Since $m=|\mcD_f|/B$, smaller batches enlarge the hypothesis space $\mcS_m$ and raise the attainable lower bound --- but only if the adversary can pick the right hypothesis. Uncertified algorithms admit a strong per-batch signal, so small $B$ (large $m$) is preferable. Certified algorithms have tight guarantees and weak per-batch signal; here, large $m$ leaves the adversary unable to beat random guessing among exponentially many candidates, and one must use a larger $B$ (smaller $m$) to obtain any nonzero bound. In addition, as discussed in \cref{sec:auditor_instantiations}, we use the \emph{joint sign vector} prediction for certified unlearning algorithms, which is precisely why having a small batch size (large $m$) is computationally infeasible. Thus, in \cref{sec:unlearning_algorithms} we stick to $B=1$ for uncertified algorithms and $B=750$ for certified algorithms on CIFAR-100.

\subsection{Dependence on the support size $r$}

The support size $r$ lets the auditor abstain on uncertain batches, improving the overlap on those it does report. Given $m$, the maximum lower bound \emph{conditional on perfect overlap} grows with $r$: at $m=100$, perfect overlap yields $\varepsilon_{\mathrm{LB}} \ge 40.83$ at $r=60$, rising to $\varepsilon_{\mathrm{LB}} \ge 67.87$ at $r=100$. In practice, the empirical lower bound traces an inverted-U in $r$. It first increases for two compounding reasons: the maximum attainable bound at perfect overlap grows with $r$, and the auditor's reported predictions remain accurate while there are still confident batches to report. As $r$ grows further, however, perfect overlap becomes harder to attain --- the auditor is forced to commit to low-confidence batches, which dilute the overlap score and pull the bound back down. 

In \cref{tab:batch_size_comparison}, we plot the lower bound against $r \cdot B$ (equivalently, $4500\,r/m$, since $m = |\mcD_f|/B = 4500/B$) for varying batch sizes $B \in \{1, 10, 100, 500\}$ on the ascent-on-forget unlearning algorithm. For both $B=1$ and $B=10$, the lower bound first increases with $r$ and then decreases. This inverted-U effect is more pronounced at smaller batch sizes (larger $m$).

\begin{table}[t]
	\centering
	\small
	\setlength{\tabcolsep}{7pt}
	\begin{tabular}{lcccc}
		\toprule
		\diagbox{$r \cdot B$}{$B$}
		& $B=1$ ($m=4500$)
		& $B=10$ ($m=450$)
        & $B=100$ ($m=45$)
		& $B=500$ ($m=9$) \\
		\midrule
		100   & 17.4945  & 3.31 & N/A  & N/A   \\
		200   & 29.5805  & 6.46  & 0.53 & N.A   \\
		400   & 47.8495  & 13.06 & 1.21 & N.A   \\
		1000  & 80.693   & 29.74 & 3.26 & 0.00     \\
		2000  & 90.1105  & 51.47 & 6.64  & 0.27 \\
		3000  & 79.835   & 64.08 & 9.99 & 0.76 \\
		3500  & 73.21    & 65.19  & N/A  & N/A   \\
		4000  & 66.631   & 64.22 & 13.26 & 1.29 \\
		\bottomrule
	\end{tabular}
	\caption{Comparison across batch sizes for the ascent on forget set setting.}
	\label{tab:batch_size_comparison}
\end{table}

% \begin{table}[t]
% 	\centering
% 	\small
% 	\setlength{\tabcolsep}{7pt}
% 	\begin{tabular}{lcccc}
% 		\toprule
% 		\diagbox{$k \times bs $}{Method}
% 		& $bs=1 (m = 4500)$
% 		& $bs=10 (m=450)$
%         & $bs=100(m = 45)$
% 		& $bs=500 (m=9) $ \\
% 		\midrule
% 		100   & 34.989  & 6.621 & N/A & N/A     \\
% 		200  & 59.161  & 12.912 &1.06 & N.A     \\
% 		400  & 95.699  & 26.122 &2.42 & N.A     \\
% 		1000  & 161.386 & 59.488 &6.51 & 0     \\
% 		2000 & 180.221 & 102.936 & 13.28 & 0.533 \\
% 		3000 & 159.670 & 128.155 & 19.98 & 1.514 \\
% 		3500 & 146.42	& 130.38 & N/A & N/A\\
% 		4000 & 133.262 & 128.445 & 26.52 & 2.584 \\
% 		\bottomrule
% 	\end{tabular}
% 	\caption{Comparison across batch sizes for the ascent on forget set setting.}
% 	\label{tab:batch_size_comparison}
% \end{table}

% \begin{corollary}[Monotonicity of audit bounds \cref{lemma:lower_bound_overlap_score}]\label{cor:bound_monotonicity}
% Fix any $\delta \in [0, M'/K)$. Then the right-hand sides of \cref{eq:lb-mean} and \cref{eq:lb-median} in \cref{lemma:lower_bound_overlap_score} are monotone non-decreasing in $\varepsilon$.
% \end{corollary}

\subsection{Batchwise inclusion/exclusion versus joint sign-vector prediction}\label{sec:appendix_instantiation_study}\label{sec:instantiation_discussion_certified_uncertified}

The choice of auditor instantiation depends on whether the unlearning algorithm is certified. For certified methods such as model clipping \citep{koloskova2025certifiedunlearningneuralnetworks} and R2D \citep{rewind2delete}, Gaussian noise injected during unlearning makes the in- and out-distributions for a single batch nearly indistinguishable, so per-batch likelihood ratios rarely yield overlap scores above the random baseline of $r/2$. We therefore use the \emph{joint sign-vector} predictor, which calibrates a separate Gaussian for every candidate $\widetilde S \in \Sm$ and aggregates evidence across all forget points before committing to a single sign vector. Since this requires independent calibration runs for each of the $|\Sm|$ candidates, we restrict to small $m$ --- concretely, batch size $B=750$ giving $m=6$ forget batches. For uncertified methods, the per-batch signal is strong enough that this issue does not arise; we obtain the strongest bounds at large $m$, where $|\Sm|$ is astronomically large and enumeration is infeasible, and so default to the \emph{batchwise inclusion/exclusion} predictor.

\paragraph{Experiment with batchwise inclusion/exclusion.} We also evaluated the \emph{batchwise inclusion/exclusion} auditor on the uncertified variant of model clipping, with \(C_2 = 5\) and \(\sigma = 10^{-4}\) as in \cref{sec:certified_unlearning_audited}. We used batch sizes \(B=750\) and \(B=100\), with \(r=6\) and \(r=40\), respectively.  The resulting lower bounds were only \(0.112\) and \(0.007\) even just after the \emph{first} step of unlearning.  We believe this is because, after noise is added to the model at each step, the predictions of the resulting models under inclusion and exclusion of a forget batch start to overlap substantially. Once this overlap becomes large, distinguishing the two cases through batchwise membership inference becomes difficult, leading to only a negligible lower bound.

%We possibly think a reason for this might be the fact that after adding noise to the model post every step, the model gets too noisy and doing a membership inference attack to predict whether a batch was included or excluded in the forget set becomes too hard to predict.
%
%We believe this is because clipping combined with post-update Gaussian noise significantly reduces the signal-to-noise ratio of the batchwise membership signal. As a result, the model distributions induced by including versus excluding a particular forget batch exhibit substantial overlap, making the associated likelihood-ratio test only marginally better than random guessing. In effect, the Gaussian perturbation masks the marginal contribution of an individual forget batch, so a batchwise membership inference attack becomes too weak to certify a nontrivial lower bound.
\section{Unlearning algorithms: experimental details}\label{sec:appendix_unlearning_algorithms}

This appendix gives the full setup for each unlearning algorithm audited in \cref{sec:unlearning_algorithms}. \cref{sec:appendix_model_clipping_setup,sec:appendix_rewind_to_delete} cover the certified algorithms; \cref{sec:appendix_uncertified_methods,sec:appendix_hessian_unlearning,sec:appendix_ida_retraining,sec:appendix_ascent_on_forget,sec:appendix_pure_finetune} cover the four heuristic algorithms together with their base-training configurations.

\subsection{Model clipping}\label{sec:appendix_model_clipping_setup}

\subsubsection{Algorithm}

Let $\hat{x}$ denote the trained model produced by $\mcA$ prior to unlearning. The unlearning procedure of \citet{koloskova2025certifiedunlearningneuralnetworks} consists of two phases.

\paragraph{Phase 1: noisy projection.} For $t = 0, 1, \dots, T_{\mathrm{noisy}} - 1$, the iterate is updated by
\begin{align}
	x_0 = \hat{x} + \xi_0, \qquad
	x_{t+1} = \Pi_{C_2}\!\left(x_t - \gamma\big(g_t + \lambda x_t\big)\right) + \xi_{t+1},
\end{align}
where $\Pi_{C_2}$ is projection onto the $\ell_2$-ball of radius $C_2$, $g_t$ is the gradient of the loss on the retain set evaluated at $x_t$, $\gamma$ is the step size, $\lambda$ is the $\ell_2$ regularisation strength, and the noise satisfies $\xi_0 \sim \mcN(0,\sigma_0^2 \sI_d)$ and $\xi_{t+1} \sim \mcN(0,\sigma^2 \sI_d)$ for $t \geq 0$. The number of noisy steps $T_{\mathrm{noisy}}$ is set by Theorem~4.2 of \citet{koloskova2025certifiedunlearningneuralnetworks} so that the iterate $x_{T_{\mathrm{noisy}}}$ satisfies the target $(\varepsilon,\delta)$ guarantee.

\paragraph{Phase 2: fine-tuning.} Starting from $x_{T_{\mathrm{noisy}}}$, the algorithm runs plain gradient descent on the retain-set loss --- with no noise and no projection --- until the total step budget is exhausted.

\subsubsection{Hyperparameters}

As chosen in \citet{koloskova2025certifiedunlearningneuralnetworks}, we train for a total of $400$ epochs with a training batch size of $128$ ($317$ optimisation steps per epoch), and unlearning is applied for a total of $50$ epochs. We apply a one-cycle learning schedule with maximum learning rate $10^{-3}$ and regulariser $5\times 10^{-4}$. For unlearning Phase~1, we use a constant learning rate of $10^{-3}$ for the noisy steps; for Phase~2 we use a one-cycle schedule with maximum $0.1$ and regulariser $5\times 10^{-4}$. The initial clipping radius just before unlearning starts is $C_0=30$. The same regulariser is applied during the retain-set fine-tuning phase. The sweep over the certified $(\varepsilon, C_2, \sigma)$ grid is reported in \cref{tab:epsilon_lb_model_clipping}.

\begin{table}[t]
	\centering
	\small
	\begin{tabular}{cccccc}
		\toprule
		$C$ & $\sigma$ & $\varepsilon$ & \begin{tabular}[c]{@{}c@{}}Noisy\\ steps\end{tabular} & \multicolumn{2}{c}{$\varepsilon_{\mathrm{LB}}$ (uniform/adversarial)} \\
		\cmidrule(lr){5-6}
		&  &  &  & Step $1$ & Epoch $50$ \\
		\midrule
		$0.2$   & $0.1$    & $1$       & $40$   & 0    & 0 \\
		$0.1$   & $0.02$   & $10$      & N/A    & 0    & 0 \\
		$0.225$ & $0.001$  & $10^{5}$  & $4041$ & 0.0 & 0 \\
		\bottomrule
	\end{tabular}
	\caption{Lower bounds for model clipping under uniform/adversarial CIFAR-100 splits, evaluated at the first noisy step and after $50$ epochs.}
	\label{tab:epsilon_lb_model_clipping}
\end{table}

% \subsubsection{Reporting convention}

% Because the unlearning procedure is iterative, we report the empirical lower bound $\varepsilon_{\mathrm{LB}}$ as a function of the step index $t$, treating the algorithm as if it had terminated after the first $t$ steps. The bound shown at the prescribed stopping time of Theorem~4.2 of \citet{koloskova2025certifiedunlearningneuralnetworks} is the one that compares directly against the certified target $\varepsilon$.

\subsubsection{Uncertified variant of model clipping}\label{sec:appendix_uncertified_model_clipping}

For the uncertified experiment of \cref{fig:model_clipping} we set $C_2 = 5$ and $\sigma = 10^{-4}$, terminate the clipping/noise-addition phase after $10$ epochs ($3{,}170$ update steps), and then fine-tune as above. The ratio $C_2/\sigma$ is far larger than what Theorem~4.2 of \citet{koloskova2025certifiedunlearningneuralnetworks} admits, so no certified $(\varepsilon,\delta)$ guarantee applies; we still report $\varepsilon_{\mathrm{LB}}$ at every step $t$ as defined above. To obtain tighter bounds in this regime we use $L=500$ runs. For context, $1$ epoch corresponds to $317$ steps in this setup, so \cref{fig:model_clipping} effectively covers up to roughly $2$ epochs.

\subsection{Rewind-to-delete}\label{sec:appendix_rewind_to_delete}
We take $T = 40$ training epochs and $K = 5$ rewind epochs. Following \citet[Theorem~3.1]{rewind2delete}, the unlearned model $\theta''_K$ is obtained by loading the checkpoint $\theta'_{T-K}$ and then training for $K$ epochs on the retain set; final-step Gaussian noise of scale $\sigma$ is added to ensure $(\varepsilon, \delta)$-indistinguishability between the trained model $\theta'_T$ and $\theta''_K$. Since we consider $\varepsilon > 1$, the noise parameter $\sigma$ \footnote{To compute $\sigma$, one needs the smoothness constant $L$ and a uniform bound on the gradient $G$, both estimated following \cite{rewind2delete}.
} must satisfy
\[
\Phi\!\left(-\tfrac{\varepsilon \sigma}{\Delta} + \tfrac{\Delta}{2\sigma}\right) - e^{\varepsilon}\, \Phi\!\left(-\tfrac{\varepsilon \sigma}{\Delta} - \tfrac{\Delta}{2\sigma}\right) \le \delta,
\]
where $\Phi$ is the standard Gaussian CDF and $\Delta = \|\theta'_T - \theta''_K\|$ is the sensitivity. Both $T$ and $K$ are kept small because the resulting $\sigma$ grows exponentially in $T$ and $K$. Theoretical guarantees for R2D hold only under full-batch gradient descent, so we use full-batch updates throughout; on CIFAR-100 this yields a base model with poor accuracy of around $4\%$. In our auditing setup, we use a learning rate of $0.01$. We swept over $\{0.1, 0.01, 0.001\}$; the resulting $\varepsilon_{\mathrm{LB}}$ values were all small and did not differ meaningfully across the sweep, so we report results for $0.01$ throughout.

% \subsection{Rewind-to-delete}\label{sec:appendix_rewind_to_delete}

% In our setup we take $T = 40$ epochs and $K = 5$ epochs. Both are kept small because the noise $\sigma$ satisfying \citep[Theorem~3.1]{rewind2delete} grows exponentially with $T$ and $K$. Theoretical guarantees for rewind-to-delete only hold under full-batch gradient descent, so we use full-batch updates throughout; on CIFAR-100 this yields a base model with poor accuracy of around $4\%$.

%%\sss{TO-DO: report $\varepsilon_{\mathrm{LB}}$ values for rewind-to-delete here.}

\subsection{Base training configurations for uncertified methods}\label{sec:appendix_uncertified_methods}

\paragraph{Shakespeare.} The base model is trained for $15$ epochs using SGD with learning rate $0.1$, batch size $256$, and gradient clipping of $1.0$, with no weight decay or learning-rate scheduling. The resulting accuracy is roughly $0.53$, in line with \citep{mcmahan2017communication}. Unlearning runs for a total of $8$ epochs; the retain-set fine-tuning phase uses learning rate $0.1$.

\paragraph{CIFAR-100.} For Hessian-based unlearning, we use a cosine scheduler with learning rate $0.1$ following \citet{zhang2025certifiedunlearningdeepneural}, training for a total of $400$ epochs. For the remaining three uncertified methods (IDA, ascent on the forget set, pure fine-tuning), the training configuration matches \cref{sec:appendix_model_clipping_setup}; the retain-set fine-tuning phase reverts to a one-cycle schedule with maximum learning rate $0.1$.

\subsection{Hessian-based unlearning}\label{sec:appendix_hessian_unlearning}

\paragraph{Update rule.} Let $g_f$ denote the forget-set gradient and $H$ the (regularised) Hessian. The unlearning step is
\begin{equation}\label{eq:hessian_update}
\theta \;\mapsto\; \theta - H^{-1} g_f + \xi, \qquad \xi \sim \mcN(0, \sigma^2 \sI),
\end{equation}
where $\sigma$ is the final noise scale and the Gaussian perturbation is added to every parameter.

\paragraph{LiSSA approximation.} As in \citet{zhang2025certifiedunlearningdeepneural}, we approximate the inverse-Hessian--vector product $H^{-1} g$ via the LiSSA algorithm of \citet{agarwallisasecondorder}, avoiding explicit construction of the Hessian. The LiSSA approximation is governed by four hyperparameters --- depth parameters $s_1$ and $s_2$, the \texttt{scale}, and the retain batch size --- together with a weight decay applied during the Hessian computation. We describe each below before giving the per-dataset values.

\paragraph{Depth parameters $s_1$ and $s_2$.} The parameter $s_1$ controls the number of independent LiSSA estimation runs whose results are averaged to reduce variance, while $s_2$ controls the depth of the Neumann-series recursion within each run, governing convergence to the true Newton direction.

\paragraph{Scale.} The \texttt{scale} parameter must strictly upper-bound the largest eigenvalue of the regularised Hessian to ensure that the Neumann series converges. Too small a value causes divergence, while too large a value slows convergence and undershoots the Newton step.

\paragraph{Unlearning batch size.} The unlearning batch size controls the size of the random retain mini-batches sampled at each recursion step, trading off gradient noise against computational cost per LiSSA iteration.

\paragraph{Weight decay.} The weight decay enters as a squared $L_2$ regularisation term added to the loss during the Hessian computation, effectively shifting the Hessian by $\lambda \mI$ and improving its conditioning for inversion. Crucially, it is applied only in the Hessian computation and not in the gradient $g$, so that the Newton update approximates $H^{-1} g$ where $H$ is the regularised Hessian but $g$ is the unregularised retain gradient.

\paragraph{Per-dataset values.} For both datasets we set the weight decay to $5 \times 10^{-4}$. For CIFAR-100 we use $s_1 = 10$, $s_2 = 1000$, $\texttt{scale} = 1000$, and an unlearning batch size of $10$. For Shakespeare we use $s_1 = 5$, $s_2 = 700$, $\texttt{scale} = 5000$, and an unlearning batch size of $256$ (matching the training batch size); $s_1$ and $s_2$ are chosen slightly smaller for Shakespeare for computational efficiency.

\paragraph{Final noise.} After the Hessian-based step, Gaussian noise of standard deviation $10^{-3}$ is added to every parameter, identically in both setups. We additionally perform an experiment below describing how varying the final noise changes the lower bound of unlearning parameter $\varepsilon$.

\paragraph{Audit by varying the final noise step.} We perform two additional experiments on CIFAR-100 (uniform splitting) keeping all parameters unchanged but only tweaking the final noise to $5\times 10^{-3}$ and $10^{-2}$ respectively in \cref{tab:noise_ablation_cifar100}.  Trends are along expected lines. Although, we see the computed lower bound gets smaller with noise, the test accuracy also reduces.

%%and we obtain a lower bound of 56.77 and 0.40 respectively. Recall from \cref{tab:lower_bound_comparison_cifar100} that the lower bound under final noise = 1e-3 (original setup) was 285. This is along expected lines and the monotonicty we see is expected that adding additional noise improves unlearning. However, the final test accura

\begin{table}[t]
	\centering
	\small
	\setlength{\tabcolsep}{10pt}
	\begin{tabular}{clc}
		\toprule
		Final noise \(\sigma\) & Lower bound & R/F/T \\
		\midrule
		\(10^{-3}\) (original)   & 142.5 &  0.69/0.67/0.51\\
		\(5 \times 10^{-3}\) & 28.5 &  0.53/0.51/0.42\\
		\(10^{-2}\)   & 0.20 & 0.22/0.21/0.19\\
		\bottomrule
	\end{tabular}
	\caption{$\varepsilon$ lower bound with noise. R/F/T denotes retain, forget and test accuracy}
	\label{tab:noise_ablation_cifar100}
\end{table}

\subsection{Interleaved descent--ascent (IDA)}\label{sec:appendix_ida_retraining}

\paragraph{Update rule.} Letting $g_t^{r}$ and $g_t^{f}$ denote the retain- and forget-loss gradients at step $t$, the IDA update for the first $E_f$ epochs is
\begin{equation}\label{eq:ida_update}
x_{t+1} = x_t - \eta_t \bigl(g_t^{r} - \lambda\, \mathbf{1}\{t \equiv 0 \!\!\pmod{q+1}\}\, g_t^{f}\bigr),
\end{equation}
so that the interleaved step (with the forget-gradient term active) fires once per $q+1$ updates and the remaining $q$ updates within each cycle reduce to plain retain descent $x_{t+1} = x_t - \eta_t g_t^{r}$. After $E_f$ epochs, only the latter (plain retain descent) is used.

\paragraph{CIFAR-100.} We fix $\lambda = 0.75$, vary $q \in \{1,2,4\}$, and set $E_f = 10$. The learning rate is held constant at $10^{-3}$ during the interleaved phase, and reverts to a one-cycle schedule with maximum $0.1$ during the retain-set fine-tuning phase.

\paragraph{Shakespeare.} We vary $q \in \{1,2,4\}$, $E_f \in \{5,7\}$, and $\lambda \in \{0.5, 1.0, 1.5\}$. The learning rate is held constant at $0.05$ during the interleaved phase and held constant at 0.1 during the retain finetuning phase.

%%\sss{Specify the exact learning-rate schedule for the retain-set fine-tuning phase on Shakespeare (the previous draft cut off mid-sentence).}

\subsection{Ascent on the forget set}\label{sec:appendix_ascent_on_forget}

\paragraph{Update rule.} The unlearning phase performs gradient ascent on the forget set,
\begin{equation}\label{eq:ascent_update}
x_{t+1} = x_t + \eta_t\, g_t^{f},
\end{equation}
where $g_t^{f}$ is the gradient of the forget loss at step $t$. After a fixed number of epochs (2 for both CIFAR-100 and Shakespeare), the algorithm switches to plain retain-set fine-tuning.

\paragraph{CIFAR-100.} The ascent learning rate is set to $10^{-3}$ during the unlearning phase, reverting to a one-cycle schedule with maximum $0.1$ during the retain-set fine-tuning phase.

\paragraph{Shakespeare.} The ascent learning rate is held constant at $0.05$ during the unlearning phase and held at 0.1 during the retain finetuning phase identical to the rate for interleaved ascent-descent.
\subsection{Pure fine-tuning on the retain set}\label{sec:appendix_pure_finetune}

The unlearning phase consists entirely of fine-tuning on the retain set, equivalent to a special case of model clipping with $C=C_2=\infty$ and $\sigma=0$. Learning-rate schedules match the corresponding retain-set fine-tuning phases described in \cref{sec:appendix_uncertified_methods}.

\section{Convex unlearning: auditor and hyper-parameter details}\label{sec:appendix_convex_unlearning}

\begin{definition}[$(\varepsilon,\delta)$-certified convex unlearning]\label{def:convex_unlearning}
	Given a dataset $\mcD$ of size $\npt$ and a forget set $\mcD_f\subseteq \mcD$ of size $\mpt$, let $\mcD_r:=\mcD\setminus\mcD_f$. An unlearning algorithm $\mcU$ is an $(\varepsilon,\delta)$-certified convex unlearning algorithm for $\mcA$ if $\mcU(\mcD_f,\mcA(\mcD),S(\mcD))$ and $\mcU(\emptyset,\mcA(\mcD_r),S(\mcD_r))$ are $(\varepsilon,\delta)$-indistinguishable.
\end{definition}

This appendix gives the auditor pipeline, the Perturbed Newton unlearning algorithm of \citet[Algorithm~1]{sekharimachineunlearningconvex}, the cubic and logistic loss constructions, and the hyper-parameter sweep used in \cref{sec:unlearning_algorithms}.

\subsection{Auditor pipeline}\label{sec:appendix_convex_auditor}

We adapt the pairwise auditor of \cref{par:pairwise_auditor} (\citealp{nasr2021adversaryinstantiationlowerbounds}-style) to the convex setting, with two changes. First, in line with \cref{def:convex_unlearning}, the auditor distinguishes the unlearned model produced from a non-empty forget set, $\mcU(\mcD_f,\mcA(\mcD),S(\mcD))$, against the model produced from an empty forget set, $\mcU(\emptyset,\mcA(\mcD_r),S(\mcD_r))$, rather than two unlearned models from different non-empty forget sets. Second, since the unlearned models live in a low-dimensional Euclidean parameter space, we instantiate the pairwise auditor's predictor directly on the model weights rather than on per-example logit scores: we train $\Gamma=50$ independent models under each setting, fit Gaussian distributions $\mathcal N(\mu_r,\Sigma_r)$ and $\mathcal N(\mu_f,\Sigma_f)$ to the resulting weight vectors, and at evaluation draw $2L$ test samples, each independently from one of the two fitted distributions. The auditor predicts the more likely generating distribution under these Gaussians, and the resulting empirical false-positive and false-negative rates are plugged into \cref{eq:pairwise_auditor_lb} to obtain $\varepsilon_{\mathrm{LB}}$. The null hypothesis $H_0: \varepsilon\le \varepsilon_{\mathrm{LB}}$ is then rejected with probability at most $\zeta$.

We additionally compute a second lower bound, $\varepsilon_{\mathrm{LB}}$, by instantiating \cref{metaalg:auditor_unlearning} with $m=2$ and $r=2$, so that a per-run overlap score of $2$ denotes a correct prediction and $0$ an incorrect one. The induced mechanism $\mcM$ is $(\varepsilon,\delta)$-DP by post-processing, and no division by $2$ is needed since no transitivity step is used.

\paragraph{Why MSE is excluded.} Mean squared error is somewhat degenerate in this context because its Hessian is constant. Consequently, in Algorithm~1 of \citet{sekharimachineunlearningconvex}, the Hessian estimate is exact, and by Lemma~3 therein the Newton-corrected iterate coincides with the model obtained by retraining on the retain set; the only remaining randomness comes from the final Gaussian perturbation step.

\subsection{Cubic loss in one dimension}\label{sec:appendix_convex_cubic_loss}

We instantiate the cubic loss in dimension $d=1$, so $w,z\in\mathbb R$. The pointwise loss is
\begin{equation}\label{eq:cubic_loss}
	\ell(w;z) = \tfrac{\lambda_0}{2}w^2 + \tfrac{M}{6}w^3 - zw,
\end{equation}
constrained to the box $|w|\le B$, on which the loss is $\mu$-strongly convex with $\mu=\lambda_0-MB$. We place the $\npt$ retain points at $z=0$ and the $\mpt$ forget points at $z=-R$, where $R>0$ is a data-radius parameter. Under this construction, the retain-only ERM has closed-form solution $\hat w_r=0$, while the forget points exert a constant gradient pull of magnitude $R$, pushing the unlearned weight $\hat w_f$ as far from $\hat w_r$ as the box constraint allows. In our experiments we set $M=0.005$, $B=2.5$, and $R=10.0$; the full hyper-parameter sweep over $(\lambda_0,M,B,R)$ is given below.

\subsection{Hyper-parameter selection}

Let $\ell(w;z)$ denote the pointwise loss at sample $z$, and let $\mcD = \mcD_r \cup \mcD_f$ be the full training set, where $\mcD_r$ and $\mcD_f$ are the retain and forget sets respectively.

Let
\[
\hat{w}_r \;=\; \argmin_w \sum_{z \in \mcD_r} \ell(w;z)
\]
denote the (noiseless) retain-only ERM, and let $\hat{w}_f$ denote the \emph{unlearned} weight, i.e.\ the weight obtained by first running ERM on the full dataset $\mcD$ to produce a trained model and \emph{then} applying the Hessian-based Newton unlearning steps of \citet[Eq.~8]{sekharimachineunlearningconvex} on the forget set $\mcD_f$ (with the Hessian evaluated on the retain set). In particular, $\hat{w}_f$ is \emph{not} the model produced by training alone; it is the post-unlearning weight before addition of noise. \citet[Lemma~3]{sekharimachineunlearningconvex} bounds
\[
\bigl\| \hat{w}_r - \hat{w}_f \bigr\|
\;\le\;
\frac{2 M m^2 L^2}{\mu^3 n^2},
\]
when $\ell(\cdot)$ is $\mu$-strongly convex, $L$-Lipschitz, and $M$-Hessian-Lipschitz in $w$. Our objective is to choose hyper-parameters that make the empirical $\|\hat{w}_r - \hat{w}_f\|$ as close to this theoretical bound as possible, i.e.\ that drive the ratio
\[
\rho \;:=\; \frac{\|\hat{w}_r - \hat{w}_f\|_{\text{empirical}}}{2 M m^2 L^2 / (\mu^3 n^2)}
\]
toward~$1$, since a value close to~$1$ indicates that Lemma~3 is empirically tight on the chosen problem instance. While the cubic loss typically yields values of $\rho$ around $0.2$, the logistic loss yields values of $\rho$ around $3\times 10^{-5}$, which possibly explains why the lower bound is so low.

\subsection{Placement of retain and forget points}

\subsubsection{Cubic loss.}
We instantiate the cubic loss in dimension $d=1$, so $w, z \in \mathbb{R}$ throughout this subsection. We construct $\mcD = \mcD_r \cup \mcD_f$ as follows. The retain set $\mcD_r$ consists of $n$ points placed at the origin,
\[
\mcD_r = \bigl\{(z_i, y_i)\bigr\}_{i=1}^{n}, \qquad
z_i = 0,\quad y_i = 0,
\]
and the forget set $\mcD_f$ consists of $m$ points placed at $-R$,
\[
\mcD_f = \bigl\{(z_i, y_i)\bigr\}_{i=1}^{m}, \qquad
z_i = -R,\quad y_i = 0,
\]
where $R > 0$ is the data-radius parameter. All labels are zero because the cubic loss
\[
f(w, z) \;=\; \tfrac{\lambda_0}{2}\, w^2 \;+\; \tfrac{M}{6}\, w^3 \;-\; z\, w
\]
does not depend on labels. Under this construction, the retain-only ERM admits the closed-form solution $\hat{w}_r = 0$, since the retain gradient $\lambda_0 w + \tfrac{M}{2} w^2$ vanishes at $w = 0$.

The retain points are placed at the origin so that $\hat{w}_r = 0$ provides a clean reference. The forget points are placed at $-R$ so as to maximise their gradient contribution to the full-data ERM: from
\[
\frac{\partial f(w, z)}{\partial w}\bigg|_{z = -R}
\;=\;
\lambda_0\, w + \tfrac{M}{2}\, w^2 + R,
\]
the forget set exerts a constant pull of magnitude $R$ on $w$ toward $+B$, pushing the unlearned weight $\hat{w}_f$ as far from $\hat{w}_r = 0$ as the box constraint $|w| \le B$ allows, and producing the tightest empirical Lemma~3 ratio.

For the cubic loss, we sweep $M \in \{0.001,\, 0.002,\, 0.005\}$, $B \in \{2.5,\, 3.0,\, 3.5\}$, $R \in \{4,\, 5,\, 6,\, 7,\, 8,\, 10,\, 12\}$, and $\lambda_0 \in \{0.10,\, 0.12,\, 0.15,\, 0.18,\, 0.20\}$, and obtain the best (largest) ratio of $\rho = 0.22$ at $M = 0.005$, $B = 2.5$, and $R = 10.0$. Because for the cubic loss (instantiated below in dimension $d=1$) we assume $|w|$ is bounded by $B$, which yields the strong-convexity parameter $\mu = \lambda_0 - M B$, we additionally require the optimum after loss minimisation to lie inside the interval $[-B, B]$; otherwise the first-order optimality condition $\nabla \ell = 0$ may fail to hold.

\subsubsection{Logistic loss (boundary construction).}{}
The retain set consists of $n$ points with i.i.d.\ Gaussian features and labels generated from a fixed direction $\theta^*$,
\[
\mcD_r = \bigl\{(x_i, y_i)\bigr\}_{i=1}^{n}, \qquad
x_i \overset{\text{i.i.d.}}{\sim} \mathcal{N}\!\left(0,\; \tfrac{1}{d} I_d\right), \quad
y_i \mid x_i \;\sim\; \mathrm{Bernoulli}\bigl(f_1(x_i^\top \theta^*)\bigr),
\]
where $f_1(z) = \eta \cdot \mathbf{1}[z \ge 0] + (1-\eta)\cdot \mathbf{1}[z < 0]$ is the label-flip function with flip probability $1-\eta$. The forget set places all $m$ points at the same fixed boundary location with a fixed label,
\[
\mcD_f = \bigl\{(x^*, y^*)\bigr\}_{i=1}^{m}, \qquad
x^* = -\,\mathrm{sign}(\theta^*) \cdot \rho_f, \quad y^* = 1,
\]
where $\rho_f > 0$ is the forget-point norm.

This construction is chosen for two reasons. First, concentrating all forget points at a single location $x^*$ makes every forget sample exert an identical, unidirectional gradient contribution, maximising the aggregate influence of $\mcD_f$ on the full-data optimum $\hat{w}_f$ relative to the retain-only optimum $\hat{w}_r$. Second, placing $x^*$ in the direction opposite to $\theta^*$ at distance $\rho_f$ from the origin ensures that the forget signal directly opposes the retain signal, creating the largest possible shift between $\hat{w}_f$ and $\hat{w}_r$, and hence the tightest empirical Lemma~3 ratio $\rho$. 
 However, the sweep for the logistic loss yielded a maximum ratio of only $3\times 10^{-5}$, which explains the poor $\varepsilon$ lower bound, at parameters $d=2$, $\eta=0.8$, $\zeta = 0.7$, and $\rho_f= 5.0$.

\begin{remark}
    One may wonder why we do not consider split size bigger than 2. While, we did try and consider splits bigger than 2 by increasing the dimension $d$ beyond 2 and ensure the forget points live along the each separate dimension. However, this does not yield higher lower bounds.  
\end{remark}

\section{Output perturbation: mechanism and point placement}\label{sec:appendix_output_perturbation}

\subsection{Mechanism}

Given a trained model $\hat{x}$, the output-perturbation mechanism produces the unlearned model
\[
x_u \;=\; \Pi_{C_0}(\hat{x}) \;+\; \xi_0, \qquad \xi_0 \sim \mathcal{N}(0,\, \sigma^2 \mathbb{I}),
\]
where $\Pi_{C_0}$ denotes Euclidean projection onto an $\ell_2$-ball of radius $C_0$ and $\sigma$ is chosen according to Theorem~3.1 of \citet{rewind2delete} so as to satisfy $(\varepsilon, \delta)$-certified unlearning. The auditing pipeline is identical to that of the cubic loss in \cref{sec:appendix_convex_auditor} (Gaussian fits on model parameters, Clopper--Pearson lower bound), with an MSE loss and no regulariser.

\subsection{Placement of retain and forget points}

The retain set consists of $n$ points with i.i.d.\ Gaussian features and continuous labels generated from a fixed direction $\theta^*$,
\[
\mcD_r = \bigl\{(x_i, y_i)\bigr\}_{i=1}^{n}, \qquad
x_i \overset{\text{i.i.d.}}{\sim} \mathcal{N}\!\left(0,\; \tfrac{1}{d} I_d\right), \quad
y_i \mid x_i = f_{\mathrm{lin}}(x_i^\top \theta^*) - \tfrac{1}{2} + \varepsilon_i,
\]
where
\[
f_{\mathrm{lin}}(z) \;=\; \mathrm{clip}\!\left(\tfrac{1}{2} + \eta z,\; \tfrac{1}{2} - r,\; \tfrac{1}{2} + r\right)
\]
is a clipped linear link with slope $\eta$ and half-range $r$, and $\varepsilon_i \sim \mathcal{N}(0, \sigma^2)$ is observation noise (clipped so that $y_i \in [-r, r]$) with $\sigma = 0.05$.

The forget set consists of $m$ i.i.d.\ points generated from the opposite direction
\[
\theta_f \;=\; -\theta^* \cdot \frac{\rho_f}{\|\theta^*\|},
\]
i.e.\
\[
\mcD_f = \bigl\{(x_j, y_j)\bigr\}_{j=1}^{m}, \qquad
x_j \overset{\text{i.i.d.}}{\sim} \mathcal{N}\!\left(0,\; \tfrac{1}{d} I_d\right), \quad
y_j \mid x_j = f_{\mathrm{lin}}(x_j^\top \theta_f) - \tfrac{1}{2} + \varepsilon_j,
\]
with $\|\theta_f\| = \rho_f \gg \|\theta^*\|$. Because $\theta_f$ points in the direction opposite to $\theta^*$, the forget labels are systematically negatively correlated with those in $\mcD_r$, so ERM on $\mcD_r \cup \mcD_f$ is pulled strongly away from $\theta^*$, maximising the gap between the unlearned model trained with $\mcD_f$ and the one trained with the empty forget set, and therefore tightening the resulting $\varepsilon_{\text{LB}}$.

The numbers of train samples $n$ and forget samples $m$ are $600$ and $400$ respectively, with $r = 10$, $\eta = 0.5$, $\zeta = 0.9$, and $\rho_f = 10$.

\begin{remark}
	One may wonder why output perturbation gives a much tighter lower bound than Perturbed Hessian unlearning (given in \cite{sekharimachineunlearningconvex}); this is because the norm bound ($\Delta$) used for the addition of Gaussian noise is closer to what is actually attained empirically. 
\end{remark}

\section{Computational resources}\label{sec:appendix_compute}

All experiments were performed on NVIDIA RTX 4000 GPUs. Each individual run (one train + unlearn cycle for a given configuration) took roughly $45$ minutes to $1$ hour. Since we audit several unlearning algorithms with multiple calibration and evaluation runs each, the full set of experiments reported in the paper took on the order of two weeks of wall-clock time.

\end{document}